\newtheorem{lemma}{Lemma}
\newtheorem{theorem}{Theorem}
\newtheorem{definition}{Definition}
\newtheorem{remark}{Remark}
\newtheorem{assumption}{Assumption}
\newcommand{\supp}{\text{supp}}
\newcommand{\argmax}{\mathop{\arg\max}}
\newcommand{\Tr}{\text{Tr}}
\newcommand{\Trunc}{\text{Truncate}}
\newcommand{\otherwise}{\text{otherwise}}
\newcommand{\tabincell}[2]{\begin{tabular}{@{}#1@{}}#2\end{tabular}}
\numberwithin{equation}{section}
\numberwithin{table}{section}
\numberwithin{figure}{section}
\newcommand{\eigenmax}{\lambda_{\max}}
\newcommand{\spectral}[1]{\rho(#1)}
\newcommand{\junk}[1]{{}}
\newlength{\fwtwo} \setlength{\fwtwo}{0.45\textwidth}
\title{Truncated Power Method for Sparse Eigenvalue Problems}
\author{
  Xiao-Tong Yuan \\
  Department of Statistics
  Rutgers University \\
  New Jersey, 08816 \\
  \texttt{xyuan@stat.rutgers.edu} \\
  \and
  Tong Zhang \\
  Department of Statistics
  Rutgers University \\
  New Jersey, 08816 \\
  \texttt{tzhang@stat.rutgers.edu}
}
\date{}
\begin{document}

\maketitle

\begin{abstract}
This paper considers the sparse eigenvalue problem, which is to
extract dominant (largest) sparse eigenvectors with at most $k$ non-zero
components. We propose a simple yet effective solution called {\em truncated power
method} that can approximately solve the underlying nonconvex optimization problem.
A strong sparse recovery result is proved for the truncated power method, and this theory
is our key motivation for developing the new algorithm.
The proposed method is tested on applications such as
sparse principal component analysis and the densest $k$-subgraph problem.
Extensive experiments on several synthetic and real-world large scale datasets demonstrate the
competitive empirical performance of our method.
\end{abstract}

%\subparagraph{Key words.} Sparse Eigenvalue, Power Method, Sparse
%Principal Component Analysis, Densest $k$-Subgraph.

%\subparagraph{AMS subject classifications (2010).} Provide up to
%five subject classification codes here; search for the string
%``MSC'' at \url"www.ams.org".

\section{Introduction}
\label{sect:Sparse Eigenvalue Problem}

Given a $p \times p$ symmetric positive semidefinite matrix $A$, the
\emph{largest $k$-sparse eigenvalue} problem aims to maximize the
quadratic form $x^\top Ax$ with a sparse unit vector $x \in
\mathbb{R}^p$ with no more than $k$ non-zero elements:
\begin{equation}\label{prob:spase_eigenvalue}
\eigenmax (A,k) = \max_{x \in \mathbb{R}^p}  x^\top Ax, \ \ \ \ \ \
\ \text{subject to } \|x\|=1, \|x\|_0 \le k,
\end{equation}
where $\|\cdot\|$ denotes the $\ell_2$-norm, and
$\|\cdot\|_0$ denotes the $\ell_0$-norm which counts the
number of non-zero entries in a vector. The sparsity is controlled
by the values of $k$ and can be viewed as a design parameter. In
machine learning applications, e.g., principal component analysis,
this problem is motivated from the following perturbation
formulation of matrix $A$:
\begin{equation}
A = \bar{A} + E,
\label{eq:error-decomp}
\end{equation}
where $A$ is the empirical covariance matrix,
$\bar{A}$ is the true covariance matrix, and $E$ is a random perturbation
due to having only a finite number of empirical samples.
If we assume that the largest eigenvector
$\bar{x}$ of $\bar{A}$ is sparse, then a natural question is to
recover $\bar{x}$ from the noisy observation $A$ when the error $E$
is ``small''. In this context, the problem~\eqref{prob:spase_eigenvalue}
is also referred to as sparse principal component analysis (sparse PCA).

In general, problem~\eqref{prob:spase_eigenvalue} is non-convex. In fact, it is also NP-hard because it can be
reduced to the subset selection problem for ordinary least squares regression~\citep{Moghaddam-2006},
which is known to be NP hard.
Various researchers have proposed approximate optimization methods: some are based on greedy
procedures~\citep[e.g.,][]{Moghaddam-2006,Jolliffe-2003,Aspremont-PathSPCA-2008},
and some others are based on various types of convex relaxation or
reformulation~\citep[e.g.,][]{Aspremont-2007,Zou-2006,Journee-2010}.
Although many algorithms have been proposed, almost no satisfactory theoretical results exist for this problem.
The only exception is the analysis of the convex relaxation method \citep{Aspremont-2007}
by \citet{Amash-2009} under the high dimensional spiked covariance model~\citep{Johnstone-2001}.
However, the result was concerned with variable selection consistency under a very simple and specific example
with limited general applicability.

This paper proposes a new computational procedure called {\em
truncated power iteration method} that approximately solves
\eqref{prob:spase_eigenvalue}. This method is similar to the
classical power method, with an additional truncation operation to
ensure sparsity. We show that if the true matrix $\bar{A}$ has a
sparse (or approximately sparse) dominant eigenvector $\bar{x}$,
then under appropriate assumptions, this algorithm can recover
$\bar{x}$ when the spectral norm of sparse submatrices of the
perturbation $E$ is small. Moreover, this result can be proved under
relative generality without restricting ourselves to the rather
specific spiked covariance model. Therefore our analysis provides
strong theoretical support for this new method, and this
differentiates our proposal from previous studies. We have applied
the proposed method to sparse PCA and to the densest $k$-subgraph
finding problem (with proper modification). Extensive experiments on
synthetic and real-world large scale datasets demonstrate both the
competitive sparse recovering performance and the computational
efficiency of our method.

It is worth mentioning that the  truncated power method developed in
this paper can also be applied to the \emph{smallest $k$-sparse
eigenvalue} problem given by:
\begin{equation}\label{prob:spase_eigenvalue_smallest}
\lambda_{\min} (A,k) = \min_{x \in \mathbb{R}^p} \; x^\top Ax, \ \ \
\ \ \ \ \text{subject to } \|x\|=1, \quad \|x\|_0 \le k , \nonumber
\end{equation}
which also has many applications in machine learning.

\subsection{Notation}
Let $\mathbb{S}^p=\{A \in \mathbb{R}^{p \times p} \mid A = A^\top\}$
denote the set of symmetric matrices, and $\mathbb{S}_{+}^p=\{A \in
\mathbb{S}^{p} , A \succeq 0\}$ denote the cone of symmetric,
positive semidefinite (PSD) matrices. For any $A \in \mathbb{S}^p$,
we denote its eigenvalues by $\lambda_{\min}(A) = \lambda_p (A) \le
\cdots \le \lambda_1(A) = \eigenmax(A)$. We use $\spectral{A}$ to
denote the spectral norm of $A$, which is
$\max\{|\lambda_{\min}(A)|,|\eigenmax(A)|\}$, and define
$\spectral{A,s} := \max\{|\lambda_{\min}(A,s)|,|\eigenmax(A,s)|\}$.
The $i$-th entry of vector $x$ is denoted by $[x]_i$ while
$[A]_{ij}$ denotes the element on the $i$-th row and $j$-th column
of matrix $A$. We denote by $A_k$ any $k \times k$ principal
submatrix of $A$ and by $A_F$ the principal submatrix of $A$ with
rows and columns indexed in set $F$. If necessary, we also denote
$A_F$ as the restriction of $A$ on the rows and columns indexed in
$F$. Let $\|x\|_{p}$ be the $\ell_p$-norm of a vector $x$. In
particular, $\|x\|_2=\sqrt{x^\top x}$ denotes the Euclidean norm,
$\|x\|_1 = \sum_{i=1}^d |[x]_i|$ denotes the $\ell_1$-norm, and
$\|x\|_0=\#\{j: [x]_j \neq 0\}$ denotes the $\ell_0$-norm. For
simplicity, we also denote the $\ell_2$ norm $\|x\|_2$ by $\|x\|$. In
the rest of the paper, we define $Q(x) := x^\top  A x$ and let $x_*$
be the optimal solution of problem~\eqref{prob:spase_eigenvalue}. We
let $\supp(x):=\{j: \mid [x]_j \neq 0\}$ denote the support set of
the vector $x$. Given an index set $F$, we define
\[
x(F):= \argmax_{x \in \mathbb{R}^p} x^\top Ax, \ \ \ \text{subject
to } \|x\|=1, \quad \supp(x) \subseteq F.
\]
Finally, we denote by $I_{p \times p}$ the $p \times p$ identity
matrix.

\subsection{Paper Organization}

The remaining of this paper is organized as follows:
Section~\ref{sect:TPower} describes the truncated power iteration
algorithm that approximately solves
problem~\eqref{prob:spase_eigenvalue}. In
Section~\ref{sect:algorithm_analysis} we analyze the solution
quality of the proposed algorithm. Section~\ref{sect:applications}
evaluates the practical performance of the proposed algorithm in
applications of sparse PCA and the densest $k$-subgraph finding
problems. We conclude this work and discuss potential extensions in
Section~\ref{sect:conclusion}.

\section{Truncated Power Method}
\label{sect:TPower}

Since $\eigenmax(A, k )$ equals
$\eigenmax(A^*_k)$ where $A^*_k$ is the $k \times k$ principal
submatrix of $A$ with the largest eigenvalue, one may solve \eqref{prob:spase_eigenvalue}
by exhaustively enumerate all subsets of $\{1,\ldots,p\}$ of size $k$ in order to find $A^*_k$.
However, this procedure is impractical even for moderate sized $k$
since the number of subsets is exponential in $k$.

Therefore in order to solve the spare eigenvalue
problem~\eqref{prob:spase_eigenvalue} more efficiently, we consider
an iterative procedure based on the standard power method for
eigenvalue problems, while maintaining the desired sparsity for the
intermediate solutions. The procedure, presented in
Algorithm~\ref{alg:sparse_power_eig}, generates a sequence of
intermediate $k$-sparse eigenvectors $x_0, x_1, \ldots$ from an
initial sparse approximation $x_0$. At each step $t$, the
intermediate vector $x_{t-1}$ is multiplied by $A$, and then the
entries are truncated to zeros except for the largest $k$ entries.
The resulting vector is then normalized to unit length, which
becomes $x_t$. It will be assumed throughout the paper that the
cardinality $k$ of $\supp(x_*)$ is available a prior; in practice
this quantity may be regarded as a tuning parameter of the
algorithm.

\begin{definition}
Given a vector $x$ and an index set $F$, we define the truncation operation $\Trunc(x,F)$
to be the vector obtained by restricting $x$ to $F$, that is
\[
[\Trunc (x,F)]_i =  \left \{
\begin{array}{l}
[x]_i \ \ \ i \in F \\
0 \ \ \ \ \ \ \otherwise
\end{array}\right..
\]

\end{definition}

\begin{algorithm}[h]
\SetKwInOut{Input}{Input}
\SetKwInOut{Output}{Output}
\SetKwInOut{Parameters}{Parameters}
\SetKwInOut{Initialization}{Initialization}

\Input{matrix $A \in \mathbb{S}^p$, initial vector $x_0 \in \mathbb{R}^p$}
\Output{$x_t$}
\Parameters{cardinality $k \in \{1,...,p\}$}
\BlankLine Let $t=1$.

\Repeat{Convergence} {

Compute $x'_{t} = Ax_{t-1}/\|A x_{t-1}\|$.

Let $F_t = \supp(x'_t,k)$ be the indices of $x'_t$ with the largest $k$
absolute values.

Compute $\hat{x}_t = \Trunc( x'_t, F_t )$.

Normalize $x_t = \hat{x}_t/\|\hat{x}_t\|$.

$t \leftarrow t + 1$.

}

\caption{Truncated Power (TPower) Method}  \label{alg:sparse_power_eig}
\end{algorithm}

\begin{remark}
Similar to the behavior of traditional power method, if $A \in
\mathbb{S}_+^p$, then TPower tries to find the (sparse) eigenvector
of $A$ corresponding to the largest eigenvalue. Otherwise, it may
find the (sparse) eigenvector with the smallest eigenvalue if
$-\lambda_p(A)
> \lambda_1(A)$. However, this situation is easily detectable
because it can only happen when $\lambda_p(A)<0$. In such case, we
may restart TPower with $A$ replaced by an appropriately shifted
version $A + \tilde{\lambda} I_{p \times p}$.
\end{remark}

% The $\lambda_{\min}(A,k)$ is an important constant on which many
% theoretical bounds of sparse approximation and subset selection
% problems
% relies~\citep{Zhang-FoBa-2008,Zhang-JMLR-2009,Das-ICML-2011}. Since
% $\lambda_{\min} (A,k) = \rho(A)- \eigenmax(\rho(A)I - A,k)$
% where $\rho(A)=\eigenmax(A)$ is the spectral norm of $A$, the
% smallest $k$-sparse eigenvalue problem is equivalent to a largest
% $k$-sparse eigenvalue problem. Therefore, in this paper, we will
% focus ourselves on the study of the largest $k$-sparse eigenvalue
% problem~\eqref{prob:spase_eigenvalue}.

\section{Sparse Recovery Analysis}
\label{sect:algorithm_analysis}

We consider the general noisy matrix model (\ref{eq:error-decomp}),
and are specially interested in the high dimensional situation where
the dimension $p$ of $A$ is large. We assume that the noise matrix
$E$ is a dense $p\times p$ matrix such that its sparse submatrices
have small spectral norm $\spectral{E,s}$ for $s$ in the same order
of $k$. We refer to this quantity as {\em restricted perturbation
error}. However, the spectral norm of the full matrix perturbation
error $\spectral{E}$ can be large. For example, if the original
covariance is corrupted by an additive standard Gaussian iid noise
vector, then $\spectral{E,s} = O(\sqrt{s\log p / n})$, which grows
linearly in $\sqrt{s}$, instead of $\spectral{E}=O(\sqrt{p/n})$, 
which grows linearly in $\sqrt{p}$. The main advantage of the
sparse eigenvalue formulation~\eqref{prob:spase_eigenvalue} over the
standard eigenvalue formulation is that the estimation error of its
optimal solution depends on $\spectral{E,s}$ with respectively  a
small $s = O(k)$ rather than $\spectral{E}$. This linear dependency
on sparsity $k$ instead of the original dimension $p$ is analogous
to similar results for sparse regression (or compressive sensing)
such as \citep{CandTao05-rip}. In fact the restricted perturbation
error considered here is analogous to the idea of restricted
isometry property (RIP) considered in \citep{CandTao05-rip}.

The purpose of the section is to show that if matrix $\bar{A}$ has a unique sparse (or approximately sparse)
dominant eigenvector, then under suitable conditions,
TPower can (approximately) recover this eigenvector from the noisy observation $A$.

\begin{assumption}
  \label{assump:sparsity}
Assume that the largest eigenvalue of $\bar{A} \in \mathbb{S}^p$ is
$\lambda=\eigenmax(\bar{A})>0$ that is non-degenerate,  with a gap
$\Delta \lambda= \lambda - \max_{j>1}|\lambda_j(\bar{A})|$ between
the largest and the remaining eigenvalues. Moreover, assume that the
eigenvector $\bar{x}$ corresponding to the dominant eigenvalue
$\lambda$ is sparse with cardinality $\bar{k}=\|\bar{x}\|_0$.
\end{assumption}

We want to show that under Assumption~\ref{assump:sparsity}, if the spectral norm $\spectral{E,s}$
of the error matrix
is small for an appropriately chosen $s > \bar{k}$, then it is possible
to approximately recover $\bar{x}$.
Note that in the extreme case of $s=p$, this result follows directly from the standard eigenperturbation
analysis (which does not require Assumption~\ref{assump:sparsity}).

We now state our main result as below, which shows that under
appropriate conditions, the TPower method can recover the sparse
eigenvector. The final error bound is a direct generalization of standard matrix perturbation result that depends on the
full matrix perturbation error $\spectral{E}$. Here this quantity is replaced by the restricted
perturbation error $\spectral{E,s}$.

\begin{theorem}
We assume that Assumption~\ref{assump:sparsity} holds. Let
$s=2k+\bar{k}$ with $k \geq 4 \bar{k}$. Assume that $\rho(E, s) \le
\Delta \lambda/2$. Define
\[
\gamma(s):=\frac{\lambda  - \Delta \lambda +
\spectral{E,s}}{\lambda - \spectral{E,s}} < 1
, \qquad
\delta(s):=
\frac{\sqrt{2}\spectral{E,s}}{\sqrt{\spectral{E,s}^2 +
(\Delta\lambda - 2\spectral{E,s})^2}}.
\]
If $|x_0^\top \bar{x}| \geq u+\delta(s)$ for some $\|x_0\|_0 \leq k$, $\|x_0\|=1$, and $u \in [0,1]$ such that
\begin{align*}
\mu_1 =&  (1-\gamma(s)^2) u(1-u^2)/2  - \left(2\delta(s)+(\bar{k}/k)^{1/2}\right)\in (0,1) , \\
\mu_2 =&  \sqrt{(1+3(\bar{k}/k)^{1/2})(1-0.45(1-\gamma(s)^2))} < 1 ,
\end{align*}
then let $\mu=\max(\sqrt{1-\mu_1},\mu_2)$, we have
\[
\sqrt{1-|x_t^\top \bar{x}|} \leq \mu^t \sqrt{1-|x_0^\top \bar{x}|} +
\sqrt{5} \delta(s) /(1-\mu_2) .
\]
\label{thm:recovery}
\end{theorem}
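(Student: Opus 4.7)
The plan is to establish a one-step inequality of the form
\[
\sqrt{1 - |x_t^\top \bar{x}|} \;\leq\; \mu\, \sqrt{1 - |x_{t-1}^\top \bar{x}|} \,+\, c\,\delta(s),
\]
and then iterate the resulting geometric recursion; the constant $\sqrt{5}/(1-\mu_2)$ in the theorem is precisely the geometric sum of the per-iteration $\delta(s)$-contributions. The one-step analysis splits into three conceptual pieces: (i) restricting the iteration to a small index set so that only $s\times s$ submatrices of the perturbation $E$ enter the bounds; (ii) a classical power-step analysis of $Ax_{t-1}$; and (iii) controlling the loss from the truncation $\Trunc(\cdot,F_t)$.

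For (i), I would introduce $G_t := F_{t-1}\cup F_t \cup \supp(\bar{x})$, which has cardinality at most $2k+\bar{k}=s$. Since $x_{t-1}$ is supported in $F_{t-1}\subset G_t$ and only entries of $Ax_{t-1}$ lying in $F_t\cup\supp(\bar{x})\subset G_t$ affect either the truncated iterate $x_t$ or its inner product with $\bar{x}$, only the principal submatrix $E_{G_t}$ enters one iteration, and by definition of $\rho(E,s)$ its operator norm is at most $\rho(E,s)$. For (ii), I would decompose $x_{t-1} = \alpha \bar{x} + \beta y$ with $y\perp\bar{x}$, $\|y\|=1$, $\alpha = x_{t-1}^\top\bar{x}$, $\beta=\sqrt{1-\alpha^2}$, and apply $A = \bar{A}+E$. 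By Assumption~\ref{assump:sparsity} the matrix $\bar{A}$ multiplies the $\bar{x}$-component by $\lambda$ and contracts the orthogonal component to norm at most $(\lambda-\Delta\lambda)\beta$, while the perturbation $E_{G_t}x_{t-1}$ contributes a vector of norm at most $\rho(E,s)$. After normalization, the ratio of orthogonal mass to signal mass in $x'_t = Ax_{t-1}/\|Ax_{t-1}\|$ is bounded by $\gamma(s)\,\beta/|\alpha|$ plus an additive term of order $\delta(s)$, which is exactly where the defined quantities $\gamma(s)$ and $\delta(s)$ arise.

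For (iii), because $\bar{x}$ is $\bar{k}$-sparse and $k\geq 4\bar{k}$, the top-$k$ set $F_t$ can miss at most $\bar{k}$ coordinates of $\supp(\bar{x})$; an entrywise comparison shows that the truncation decreases $|x^\top\bar{x}|$ by at most a quantity of order $\sqrt{\bar{k}/k}$ times the orthogonal mass of $x'_t$. Combining the power-step and truncation bounds yields two alternative estimates for $1-|x_t^\top\bar{x}|$: a sharp one that uses the precondition $|x_{t-1}^\top\bar{x}|\ge u+\delta(s)$, producing the $\mu_1$ term whose $u(1-u^2)$ factor quantifies initial progress; and a uniform one valid in every regime, producing the $\mu_2$ term (the factor $0.45$ coming from algebraic slack between the truncation estimate and the $(1-\gamma(s)^2)$ power-step gain). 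Taking $\mu=\max(\sqrt{1-\mu_1},\mu_2)$ and iterating the one-step recursion delivers the claimed bound.

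The hardest part, I expect, is making the truncation bound and the restricted-perturbation bound compatible with a recursion that preserves the precondition $|x_t^\top\bar{x}|\ge u+\delta(s)$ across iterations. One must simultaneously verify (a) that after truncation the iterate remains in the basin where the $\mu_1$-bound applies until the uniform $\mu_2$-bound takes over, and (b) that the combinatorial selection of $F_t$ as the top-$k$ support cannot destroy alignment with $\supp(\bar{x})$ in a worst-case way, even though the truncation is a discontinuous operation. Once these two points are handled, matching the explicit constants in $\mu_1,\mu_2$ reduces to routine algebra combining the signal/orthogonal decomposition of the power step with the $\sqrt{\bar{k}/k}$ truncation loss.
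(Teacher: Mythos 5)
Your proposal follows essentially the same route as the paper: restrict each iteration to $F_{t-1}\cup F_t\cup\supp(\bar{x})$ of size at most $s=2k+\bar{k}$ so that only $\rho(E,s)$ enters, combine a power-step progress bound governed by $\gamma(s)$ (the paper routes this through the dominant eigenvector $x(F)$ of $A_F$, which is within $\delta(s)$ of $\bar{x}$, rather than perturbing along $\bar{x}$ directly, but the two are equivalent) with a $(\bar{k}/k)^{1/2}$ truncation-loss bound, run a two-regime analysis giving the $\mu_1$ (additive progress) and $\mu_2$ (contraction) rates while preserving the invariant $|x_t^\top\bar{x}|\ge u+\delta(s)$, and sum the geometric series. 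One small correction: the $\mu_2$ contraction with the $0.45$ constant is \emph{not} valid in every regime---it requires $|x_{t-1}^\top\bar{x}|$ to exceed roughly $1/\sqrt{3}$, which is precisely why the $\mu_1$ additive-improvement bound must cover the low-correlation phase, as your own discussion of the basin transition anticipates.
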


\begin{remark}
  We only state our result with a relatively simple but easy to understand quantity $\rho(E,s)$, which
  we refer to as restricted perturbation error. It is analogous to the RIP concept in \citep{CandTao05-rip}, and
  is also directly comparable to the traditional full matrix perturbation error $\rho(E)$.
  While it is possible to obtain sharper results with additional quantities, we intentionally keep the theorem simple so that
  its consequence is relatively easy to interpret.
\end{remark}

\begin{remark}
  Although we state the result by assuming that the dominant eigenvector $\bar{x}$ is sparse, the theorem can also
  be applied to certain situations that $\bar{x}$ is only approximately sparse. In such case, we simply let
  $\bar{x}'$ be a $\bar{k}$ sparse approximation of $\bar{x}$. If $\bar{x}'-\bar{x}$ is sufficiently small, then
  $\bar{x}'$ is the dominant eigenvector of a symmetric matrix $\bar{A}'$ that is
  close to $\bar{A}$; hence the theorem can be
  applied with the decomposition $A = \bar{A}' + E'$ where $E'= E + A-\bar{A}'$.
\end{remark}

Note that we did not make any attempt to optimize the constants in Theorem~\ref{thm:recovery}, which are relatively large. Therefore in the discussion, we shall ignore the constants, and focus on the main message of
 Theorem~\ref{thm:recovery}. If $\rho(E,s)$ is smaller than the eigen-gap $\Delta \lambda / 2 > 0$, then
$\gamma(s)<1$ and $\delta(s)=O(\rho(E,s))$.
It follows that under appropriate conditions, as long as we can find an initial $x_0$ such that
\[
|x_0^\top \bar{x}|\geq c (\rho(E,s) + (\bar{k}/k)^{1/2})
\]
for some constant $c$, then $1-|x_t^\top \bar{x}|$ converges geometrically until
\[
\|x_t- \bar{x}\|^2 = O(\rho(E,s)^2 ) .
\]
This result is similar to the standard eigenvector perturbation
result stated in Lemma~\ref{lem:perturb} of Appendix~\ref{append:proof},
except that we replace the spectral error $\rho(E,p)$ of the full
matrix by $\rho(E,s)$ that can be significantly smaller when $s \ll
p$. To our knowledge, this is the first sparse recovery
result for the sparse eigenvalue problem in a relatively general setting.
This theorem can be considered as a strong theoretical justification of the proposed
TPower algorithm that distinguishes it from earlier algorithms
without theoretical guarantees. Specifically, the replacement of the
full matrix perturbation error $\rho(E)^2$ with $\rho(E,s)^2$ gives the
theoretical insights on why TPower works well in practice.

To illustrate our result,
we briefly describe a consequence of the theorem under the spiked covariance model of \citep{Johnstone-2001} which
was investigated by \citet{Amash-2009}.
We assume that the observations are $p$ dimensional vectors
\[
x_i = \bar{x} + \epsilon ,
\]
for $i=1,\ldots,n$, where $\epsilon \sim N(0, I_{p \times p})$.
For simplicity, we assume that $\|\bar{x}\|=1$. The true covariance is
\[
\bar{A} = \bar{x} \bar{x}^\top + I_{p \times p} ,
\]
and $A$ is the empirical covariance
\[
A = \frac{1}{n} \sum_{i=1}^n x_i x_i^\top .
\]
Let $E=A-\bar{A}$, then random matrix theory implies that with large probability,
\[
\rho(E,s) = O( \sqrt{s\ln p/n}) .
\]
Now assume that $\max_j |\bar{x}_j|$ is sufficiently large.
In this case, we can run TPower with a starting point $x_0= e_j$ for some vector
$e_j$ (where $e_j$ is the vector of zeros except the $j$-th entry being one) so that
$|e_j^\top \bar{x}|=|\bar{x}_j|$ is sufficiently large, and the assumption for the
initial vector $|x_0^\top \bar{x}|\geq c (\rho(E,s) + (\bar{k}/k)^{1/2})$ is satisfied with
$s=O(\bar{k})$.
We may run TPower with an appropriate initial vector to obtain an approximate solution
$x_t$ of error
\[
\|x_t- \bar{x}\|^2 = O(\bar{k} \ln p /n) .
\]
This is optimal. Note that our results are not directly comparable
to those of \citet{Amash-2009}, which studied support recovery.
Nevertheless, it is worth noting that if $\max_j |\bar{x}_j|$ is sufficiently large,
then our result becomes meaningful when $n=O(\bar{k} \ln p)$;
however their result requires $n=O(\bar{k}^2\ln p)$ to be meaningful, although this is for the pessimistic case of
$\bar{x}$ having equal nonzero values of $1/\sqrt{\bar{k}}$.

Finally we note that if we cannot find a large initial value with
$|x_0^\top \bar{x}|$, then it may be necessary to take a relatively
large $k$ so that the requirement $|x_0^\top \bar{x}|\geq c
((\bar{k}/k)^{1/2})$ is satisfied. With such a $k$, $\rho(E,s)$ may
be relatively large and hence the theorem indicates that $x_t$ may
not converge to $\bar{x}$ accurately. Nevertheless, as long as
$|x_t^\top \bar{x}|$ converges to a value that is not too small
(e.g., can be much larger than $|x_0^\top \bar{x}|$), we may reduce
$k$ and rerun the algorithm with $x_t$ as initial vector together
with a small $k$. In this two stage process, the vector found from
the first stage (with large $k$) is used to as the initial value of
the second stage (with small $k$). Therefore we may also regard it
as an initialization method to TPower. In practice, one may use
other methods to obtain an approximate $x_0$ to initialize TPower,
not necessarily restricted to running TPower with larger $k$. Some
practical alternatives are discussed in
Section~\ref{sect:applications}.

\section{Applications}
\label{sect:applications}

In this section, we illustrate the effectiveness of TPower method
when applied to sparse principal component analysis (sparse PCA) (in
Section~\ref{ssect:SPCA}) and the densest $k$-subgraph (DkS) finding
problem (in Section~\ref{ssect:dense_k_subgraph}). The Matlab code
for reproducing the experimental results reported in this section is
online available
at~\url{https://sites.google.com/site/xtyuan1980/publications}.

\subsection{Sparse PCA}
\label{ssect:SPCA}

Principal component analysis (PCA) is a well established tool for
dimensionality reduction and has a wide range of applications in
science and engineering where high dimensional datasets are
encountered. Sparse principal component analysis (sparse PCA) is an
extension of PCA that aims at finding sparse vectors (loading
vectors) capturing the maximum amount of variance in the data. In
recent years, various researchers have proposed various approaches
to directly address the conflicting goals of explaining variance and
achieving sparsity in sparse PCA. For instance, \citet{Zou-2006}
formulated sparse PCA as a regression-type optimization problem and
imposed the Lasso~\citep{Tibshirani-1996} penalty on the regression
coefficients. The DSPCA algorithm in~\citep{Aspremont-2007} is an
$\ell_1$-norm based semidefinite relaxation for sparse PCA.
\citet{Shen-2008} resorted to the singular value decomposition (SVD)
to compute low-rank matrix approximations of the data matrix under
various sparsity-inducing penalties. Greedy search and
branch-and-bound methods were investigated in~\citep{Moghaddam-2006}
to solve small instances of sparse PCA exactly and to obtain
approximate solutions for larger scale problems. More recently,
\citet{Aspremont-PathSPCA-2008} proposed the use of greedy forward
selection with a certificate of optimality, and \citet{Journee-2010}
studied a generalized power method to solve sparse PCA with a
certain dual reformulation of the problem. In comparison, our method
works directly in the primal domain, and the resulting algorithm is
quite different from that of \citet{Journee-2010}.

Given a sample covariance matrix, $\Sigma \in \mathbb{S}_+^p$ (or
equivalently a centered data matrix $D \in \mathbb{R}^{n \times p}$
with $n$ rows of $p$-dimensional observations vectors such that
$\Sigma = D^\top D$) and the target cardinality $k$, following
~\citep{Moghaddam-2006,Aspremont-2007,Aspremont-PathSPCA-2008}, we
formulate sparse PCA as:
\begin{equation}\label{prob:sparse_pca}
\hat{x} = \argmax_{x \in \mathbb{R}^p} x^\top  \Sigma  x, \ \ \ \ \ \ \
\text{subject to } \|x\|=1, \|x\|_0 \le k.
\end{equation}
The TPower method proposed in this paper can be directly applied to
solve the above problem. One advantage of TPower for Sparse PCA is
that it directly addresses the constraint on cardinality $k$. To
find the top $m$ rather than the top one sparse loading vectors, a
common approach in the
literature~\citep{Aspremont-2007,Moghaddam-2006,Mackey-NIPS-2008} is
to use the \emph{iterative  deflation} method for PCA: subsequent
sparse loading vectors can be obtained by recursively removing the
contribution of the previously found loading vectors from the
covariance matrix. Here we employ a projection deflation scheme
from~\citep{Mackey-NIPS-2008}, which deflates an vector $\hat{x}$
using the formula:
\[
\Sigma' = (I_{p\times p} - \hat{x}\hat{x}^\top)\Sigma(I_{p\times p}
- \hat{x}\hat{x}^\top).
\]
Obviously, $\Sigma'$ remains positive semidefinite. Moreover,
$\Sigma'$ is rendered left and right orthogonal to $\hat{x}$.

\subsubsection{Connection with Existing Sparse PCA Methods}
\label{ssect:connections}

In the setup of sparse PCA, TPower is closely related to
GPower~\citep{Journee-2010} and sPCA-rSVD~\citep{Shen-2008} which
are both power-truncation-type iterative algorithms. Indeed, GPower
and sPCA-rSVD are identical except for the initialization and
post-processing phases~\citep{Journee-2010}. Given a data matrix $D
\in \mathbb{R}^{n \times p}$, the $\ell_1$-norm version of GPower
(and equivalently sPCA-rSVD) solves the following regularized rank-1
approximation problem:
\[
\min_{x \in \mathbb{R}^p, z \in \in \mathbb{R}^n} \|D - zx^\top \|_F^2 +
\gamma \|x\|_1, \ \ \ \text{subject to } \|z\|=1,
\]
while the $\ell_0$-norm version of GPower (and sPCA-rSVD) solves the
following optimization problem:
\[
\min_{x \in \mathbb{R}^p, z \in \in \mathbb{R}^n} \|D - zx^\top \|_F^2 +
\gamma \|x\|_0, \ \ \ \text{subject to } \|z\|=1.
\]
Given the covariance matrix $\Sigma = D^\top D$, it is easy to
verify that TPower optimizes the following constrained low-1 and
semidefinite approximation problem
\[
\min_{x \in \mathbb{R}^p} \|\Sigma -  xx^\top \|_F^2, \ \ \
\text{subject to } \|x\|=1, \ \|x\|_0 \le k.
\]
Essentially, TPower, GPower and sPCA-rSVD all use certain
power-truncation type procedure to generate sparse loadings.
However, the difference between TPower and GPower (sPCA-rSVD) is
also clear: the former performs rank-1, semidefinite and sparse
approximation to covariance matrix while the latter performs rank-1
and sparse approximation to the data matrix. One important benefit
of TPower is that we are able to analyze solution quality such as
sparse recovery capability, while analogous results are not
available for GPower and sPCA-rSVD.

Our method is also related to
PathSPCA~\citep{Aspremont-PathSPCA-2008} that directly addresses the
formulation~\eqref{prob:spase_eigenvalue}. The PathSPCA method is a
greedy forward selection procedure which starts from the empty set
and at each iteration it selects the most relevant variable and adds
it to the current variable set; it then re-estimates the leading
eigenvector on the augmented variable set. Both TPower and PathSPCA
output sparse solutions with exact cardinality $k$.

\subsubsection{On Initialization}

% From the discussion in Section~\ref{ssect:bounding_optimization} we
% know that the objective value sequence $\{Q(x_t)\}_{t \ge 0}$ is
% non-decreasing and thus $Q(x_t) / \eigenmax(A,k) \ge Q(x_0) /
% \eigenmax(A,k)$. Therefore, a proper initialization of $x_0$
% can be helpful to generate a sequence $\{Q(x_t)\}_{t \ge 0}$ with
% high approximation ratio.
Theorem~\ref{thm:recovery} suggests that the TPower algorithm can benefit from a good initial vector $x_0$.
In a practical implementation, the following three initialization schemes can
be considered.
\begin{itemize}
  \item[1:] One simple method is to set $[x_{0}]_j= 1$ on index $j = \argmax_{i} [A]_{ii}$ and $0$ otherwise.
    This initialization provides a $1/k$-approximation to the optimal
    value, i.e., $Q(x_0) \ge \eigenmax(A,k) / k$. Indeed, if we let
    $A^*_k$ be the $k \times k$ principle submatrix of $A$ supported on
    $\supp(x_*)$, then it is easy to verify that $[A]_{jj} \ge \Tr(A^*_k) / k
    \ge \eigenmax(A,k) / k$. In the setup of sparse PCA, this
    corresponds to initializing by selecting the variable with the largest
    variance, which is known to perform well for
    PathSPCA~\citep{Aspremont-PathSPCA-2008}. Alternatively, we may
    initialize $x_0$ as the indicator vector of the top $k$ values
    of the variances $\{[A]_{ii}\}$, as is considered by~\citet{Amash-2009}.
 \item[2:] A two-stage warm-start strategy suggested at the end of Section~\ref{sect:algorithm_analysis}.
   In the first stage we may run TPower with a relatively large $k$ and use the output
   as the initial value of the next stage with a decreased $k$. Repeat
   this procedure if necessary until the desired cardinality is reached.
   More generally, one may use other algorithms to warm start TPower.
  \item[3:] When $k \approx p$, an initialization scheme
    suggested in~\citep{Moghaddam-2006} can be employed.
  This scheme is motivated from the following observation: among all the $m$ possible $(m-1)\times(m-1)$
  principal submatrices of $A_m$, obtained by deleting the $j$-th row
  and column, there is at least one submatrix $A_{m-1} =
  A_{m\backslash j}$ whose maximal eigenvalue is a major fraction of
  its parent~\citep[see, e.g.,][]{Horn-1991}:
  \begin{equation}\label{equat:A_j_bound}
    \exists j \in \{1,...,m\}, \ \ \ \eigenmax (A_{m\backslash j})
    \ge \frac{m-1}{m} \eigenmax(A_m).
  \end{equation}
  A greedy backward elimination method is suggested using the above
  bound~\citep{Moghaddam-2006}: start with the full index set
  $\{1,...,p\}$, and sequentially delete the variable $j$ which yields
the maximum $\eigenmax(A_{m\backslash j})$ until only $k$ elements
remain. It is immediate from the bound~\eqref{equat:A_j_bound} that
this procedure will guarantee a $k/p$-approximation to the optimal
objective value. This scheme works well for relatively small $p$.
When $p$ is large, however, such a greedy initialization scheme will
be computationally prohibitive since it involves $(p-k)p$ times of
dominant eigenvalue calculation for matrices of scale $O(p\times
p)$.
\end{itemize}

\subsubsection{Results on Toy Dataset}

To illustrate the sparse recovering performance of TPower, we apply
the algorithm to a synthetic dataset drawn from a sparse PCA model.
We follow the same procedure proposed by~\citep{Shen-2008} to
generate random data with a covariance matrix having sparse
eigenvectors. To this end, a covariance matrix is first synthesized
through the eigenvalue decomposition $\Sigma = VDV^\top $, where the
first $m$ columns of $V \in \mathbb{R}^{p \times p}$ are
pre-specified sparse orthonormal vectors. A data matrix $X \in
\mathbb{R}^{n \times p}$ is then generated by drawing $n$ samples
from a zero-mean normal distribution with covariance matrix
$\Sigma$, that is $X \sim \mathcal {N}(0, \Sigma)$. The empirical
covariance $\hat{\Sigma}$ matrix is then estimated from data $X$ as
the input for TPower.

Consider a setup with $p=500$, $n=50$, and the first $m=2$ dominant
eigenvectors of $\Sigma$ are sparse. Here the first two dominant
eigenvectors are specified as follows:
\[
[v_1]_i = \left \{
\begin{array}{l}
\frac{1}{\sqrt{10}}, \ \ \ i=1,...,10 \\
0,\ \ \ \ \ \ \ \text{otherwise}
\end{array}\right., \ \ \
[v_2]_{i} = \left \{
\begin{array}{l}
\frac{1}{\sqrt{10}}, \ \ \ i=11,...,20 \\
0,\ \ \ \ \ \ \ \text{otherwise}
\end{array}\right. .
\]
The remaining eigenvectors $v_j$ for $j \ge 3$ are chosen
arbitrarily, and the eigenvalues are fixed at the following values:
\[
\left \{
\begin{array}{l}
\lambda_{1} = 400, \\
\lambda_{2} = 300, \\
\lambda_{j} = 1, \ \ \ j= 3,...,500.
\end{array}\right.
\]
We generate $500$ data matrices and employ the TPower to compute two
unit-norm sparse loading vectors $u_1, u_2 \in \mathbb{R}^{500}$,
which are hopefully close to $v_1$ and $v_2$. Our method is
compared on this dataset with a greedy algorithm
PathPCA~\citep{Aspremont-PathSPCA-2008}, a power-iteration-type
method GPower~\citep{Journee-2010}, a sparse regression based method
SPCA~\citep{Zou-2006}, and the standard PCA. For GPower, we test its
two block versions $\text{GPower}_{\ell_1,m}$ and
$\text{GPower}_{\ell_0,m}$ with $\ell_1$-norm and $\ell_0$-norm
penalties, respectively. Here we do not involve two representative
sparse PCA algorithms, the sPCA-rSVD~\citep{Shen-2008} and the
DSPCA~\citep{Aspremont-2007}, in our comparison since the former is
shown to be identical to GPower up to initialization and
post-processing phases~\citep{Journee-2010}, while the latter is
considered by the authors only as a second choice after PathSPCA.
All tested algorithms were implemented in Matlab 7.12 running on a
commodity desktop.

In this experiment, we regard the true model to be successfully recovered
when both quantities $|v_1^\top  u_1|$ and $|v_2^\top
u_2|$ are greater than $0.99$. We also assume that
the cardinality $k=10$ of the underlying sparse eigenvectors is
known a prior. Table~\ref{Tab:spca_toy_chance_success} lists the
recovering results by the tested methods. It can be observed that
TPower, PathPCA and GPower all successfully recover the ground truth
sparse PC vectors with extremely high rate of success. SPCA
frequently fails to recover the spares loadings on this dataset.
The potential reason is that SPCA is initialized with the ordinary
principal components which in many random data matrices are far away
from the truth sparse solution. Traditional PCA always fails to
recover the sparse PC loadings on this dataset.
The success of TPower and the failure of traditional PCA can be well explained by our sparse recovery result
in Theorem~\ref{thm:recovery}  (for TPower) in comparison to the traditional eigenvector perturbation theory
in Lemma~\ref{lem:perturb} (for traditional PCA), which we have already discussed in Section~\ref{sect:algorithm_analysis}.
However, the success of other methods suggests that
it might be possible to prove sparse recovery results similar to Theorem~\ref{thm:recovery} for some of these
alternative algorithms.

\begin{table}[htb]
\begin{center}
\caption{The quantitative results on a synthetic dataset. The
values $|u_1^\top u_2|$, $|v_1^\top u_1|$, $|v_2^\top u_2|$ are mean
of the 500 running. \label{Tab:spca_toy_chance_success}}
\begin{tabular}{ c  c c c c c}
\hline
Algorithms & Parameter & $|u_1^\top u_2|$ & $|v_1^\top u_1|$ & $|v_2^\top u_2|$ & Probability of success\\
\hline \hline
TPower & $k=10$ & 0 & 0.9998 & 0.9997 & 1\\
\hline
PathSPCA & $k=10$ & 0 & 0.9998 & 0.9997 & 1 \\
\hline
$\text{GPower}_{\ell_1,m}$ & $\gamma = 0.8$ & 0 & 0.9997 & 0.9996 & 0.99\\
\hline
$\text{GPower}_{\ell_0,m}$ & $\gamma = 0.8$ & $9.5 \times 10^{-5}$  & 0.9997 & 0.9991 & 0.99 \\
\hline
SPCA & $\lambda_1 = 10^{-3}$ & $2.4 \times 10^{-3}$ & 0.9274 & 0.9250 & 0.25\\
\hline
PCA & $ - $ & 0 & 0.9146 & 0.9086 & 0 \\
\hline
\end{tabular}
\end{center}
\end{table}

\subsubsection{Speed and Scaling Test}

To study the computational efficiency of TPower, we list in
Table~\ref{Tab:tpower_cpu_results} the CPU running time (in seconds)
by TPower on several datasets at different scales. The datasets are
generalized as $n \times p$ Gaussian random matrices with fixed
$n=500$, and exponentially increasing values of dimension $p$. We
set the termination criteria for TPower to be $\|Q(x_{t}) -
Q(x_{t-1})\| \le 10^{-4} $. It can be observed from
Table~\ref{Tab:tpower_cpu_results} that TPower can exit within
seconds or tens of seconds on all the datasets under a wider range of
cardinality $k$.

\begin{table}[htb]
\begin{center}
\caption{Average computational time for extracting one sparse
component (in seconds) by TPower under different setting of
dimensionality $p$ and cardinality $k$.
\label{Tab:tpower_cpu_results}}
\begin{tabular}{ c  c c c c c c}
\hline
$n \times p$ & $500\times 1000$ & $500 \times 2000$ & $500 \times 4000$ & $500 \times 8000$ & $500 \times 16000$ & $500 \times 32000$ \\
\hline \hline
$k=0.05 \times p$  & 0.07 & 0.13 & 0.39 & 0.88 & 2.52 & 7.03 \\
\hline
$k=0.1 \times p$  & 0.07 & 0.14 & 0.55 & 1.00 & 3.31 & 20.05 \\
\hline
$k=0.2 \times p$  & 0.13 & 0.21 & 0.96 & 2.00 & 7.94 & 21.78 \\
\hline
$k=0.5 \times p$  & 0.12 & 0.43 & 2.06 & 5.08 & 19.63 & 25.19 \\
\hline
\end{tabular}
\end{center}
\end{table}

\subsubsection{Results on \textsf{PitProps} Data}

The \textsf{Pitprops} dataset~\citep{Jefeers-1967}, which consists
of 180 observations with 13 measured variables, has been a standard
benchmark to evaluate algorithms for sparse PCA~\citep[See,
e.g.,][]{Zou-2006,Shen-2008,Journee-2010}. Following these previous
studies, we also consider to compute the first six sparse PCs of the
data. In Table~\ref{Tab:spca_pitprops_results}, we list the total
cardinality and the proportion of adjusted variance~\citep{Zou-2006}
explained by six components computed with TPower,
PathSPCA~\citep{Aspremont-PathSPCA-2008},
GPower~\citep{Journee-2010} and SPCA~\citep{Zou-2006}. From these
results we can see that on this relatively simple dataset, TPower,
PathSPCA and GPower perform quite similarly. SPCA is inferior to the
other three algorithms.

Table~\ref{Tab:spca_pitprops_pcs} lists the six extracted PCs by
TPower with cardinality setting 7-2-1-1-1-1. We can see that the
important variables associated with the six principal components do
not overlap, which leads to a clear interpretation of the extracted
components. The same loadings are extracted by both PathSPCA and
GPower under the parameters listed in
Table~\ref{Tab:spca_pitprops_results}.

\begin{table}[htb]
\begin{center}
\caption{The quantitative results on the \textsf{PitProps} dataset.
The result of SPCA is taken from~\citep{Zou-2006}.
\label{Tab:spca_pitprops_results}}
\begin{tabular}{ c  c c c}
\hline Method & Parameters & Total cardinality & Prop. of explained variance\\
\hline \hline
TPower & cardinalities: 8-8-4-2-2-2 & 26 & 0.8636 \\
TPower & cardinalities: 7-2-3-1-1-1 & 15 & 0.8230 \\
TPower & cardinalities: 7-2-1-1-1-1 & 13 & 0.7599 \\
%TPower & cardinalities: 5-2-2-1-1-2 & 13 & 0.7588 \\
\hline
PathSPCA & cardinalities: 8-8-4-2-2-2 & 26 & 0.8615 \\
PathSPCA & cardinalities: 7-2-3-1-1-1 & 15 & 0.8230 \\
PathSPCA & cardinalities: 7-2-1-1-1-1 & 13 & 0.7599 \\
%PathSPCA & cardinalities: 5-2-2-1-1-2 & 13 & 0.7588 \\
\hline
$\text{GPower}_{\ell_1,m}$ & $\gamma = 0.22$ & 26 & 0.8438 \\
$\text{GPower}_{\ell_1,m}$ & $\gamma = 0.30$ & 15 & 0.8230 \\
$\text{GPower}_{\ell_1,m}$ & $\gamma = 0.40$ & 13 & 0.7599 \\
\hline
SPCA & see~\citep{Zou-2006} & 18 & 0.7580 \\
\hline
\end{tabular}
\end{center}
\end{table}

\begin{table}[htb]
\center \scriptsize
\begin{center}
\caption{The extracted six PCs on \textsf{PitProps} dataset by
TPower with cardinality setting 7-2-1-1-1-1. Note that in this
setting, the extracted six PCs are orthogonal and the significant
loadings are non-overlapping. \label{Tab:spca_pitprops_pcs}}
\begin{tabular}{ c c c c c c c c c c c c c c}
\hline
PCs & \tabincell{c}{$x_1$ \\ topd} & \tabincell{c}{$x_2$ \\ length}  & \tabincell{c}{$x_3$ \\ moist}  & \tabincell{c}{$x_4$ \\ testsg} & \tabincell{c}{$x_5$ \\ ovensg}  & \tabincell{c}{$x_6$ \\ ringt}  & \tabincell{c}{$x_7$ \\ ringb} & \tabincell{c}{$x_8$ \\ bowm}  & \tabincell{c}{$x_9$ \\ bowd}  & \tabincell{c}{$x_{10}$ \\ whorls} & \tabincell{c}{$x_{11}$ \\ clear}  & \tabincell{c}{$x_{12}$ \\ knots}  & \tabincell{c}{$x_{13}$ \\ diaknot} \\
\hline \hline
PC1 & 0.4235 & 0.4302 & 0 & 0& 0 & 0.2680 & 0.4032 & 0.3134 & 0.3787 & 0.3994 & 0 & 0 & 0  \\
\hline
PC2 & 0 & 0 & 0.7071 & 0.7071 & 0 & 0 & 0 & 0 & 0 & 0 & 0 & 0 & 0  \\
\hline
PC3 & 0 & 0 & 0 & 0 & 1.000 & 0 & 0 & 0 & 0 & 0 & 0 & 0 & 0  \\
\hline
PC4 & 0 & 0 & 0 & 0 & 0 & 0 & 0 & 0 & 0 & 0 & 1.000 & 0 & 0  \\
\hline
PC5 & 0 & 0 & 0 & 0 & 0 & 0 & 0 &  & 0 & 0 & 0 & 1.000 & 0  \\
\hline
PC6 & 0 & 0 & 0 & 0 & 0 & 0 & 0 & 0 & 0 & 0 & 0 & 0 & 1.000  \\
\hline
\end{tabular}
\end{center}
\end{table}

\subsubsection{Results on Biological Data}

We have also evaluated the performance of TPower on two gene
expression datasets, one is the \textsf{Colon cancer} data
from~\citep{Alon-1999}, the other is the \textsf{Lymphoma} data
from~\citep{Alizadeh-2000}. Following the experimental setup in
\citep{Aspremont-PathSPCA-2008}, we consider the $500$ genes with
the largest variances. We plot the variance versus cardinality
tradeoff curves in Figure~\ref{fig:exp_path_biological}, together
with the result from PathSPCA~\citep{Aspremont-PathSPCA-2008} and
the upper bounds of optimal values
from~\citep{Aspremont-PathSPCA-2008}. Note that our method performs
almost identical to the PathSPCA which is demonstrated to have
optimal or very close to optimal solutions in many cardinalities.
The computational time of the two methods on both datasets is
comparable and is less than two seconds.

\begin{figure}
\centering
%-------------------------------------Iteration 1 --------------------------------------------
\subfigure[\textsf{Colon
Cancer}]{\includegraphics[width=80mm]{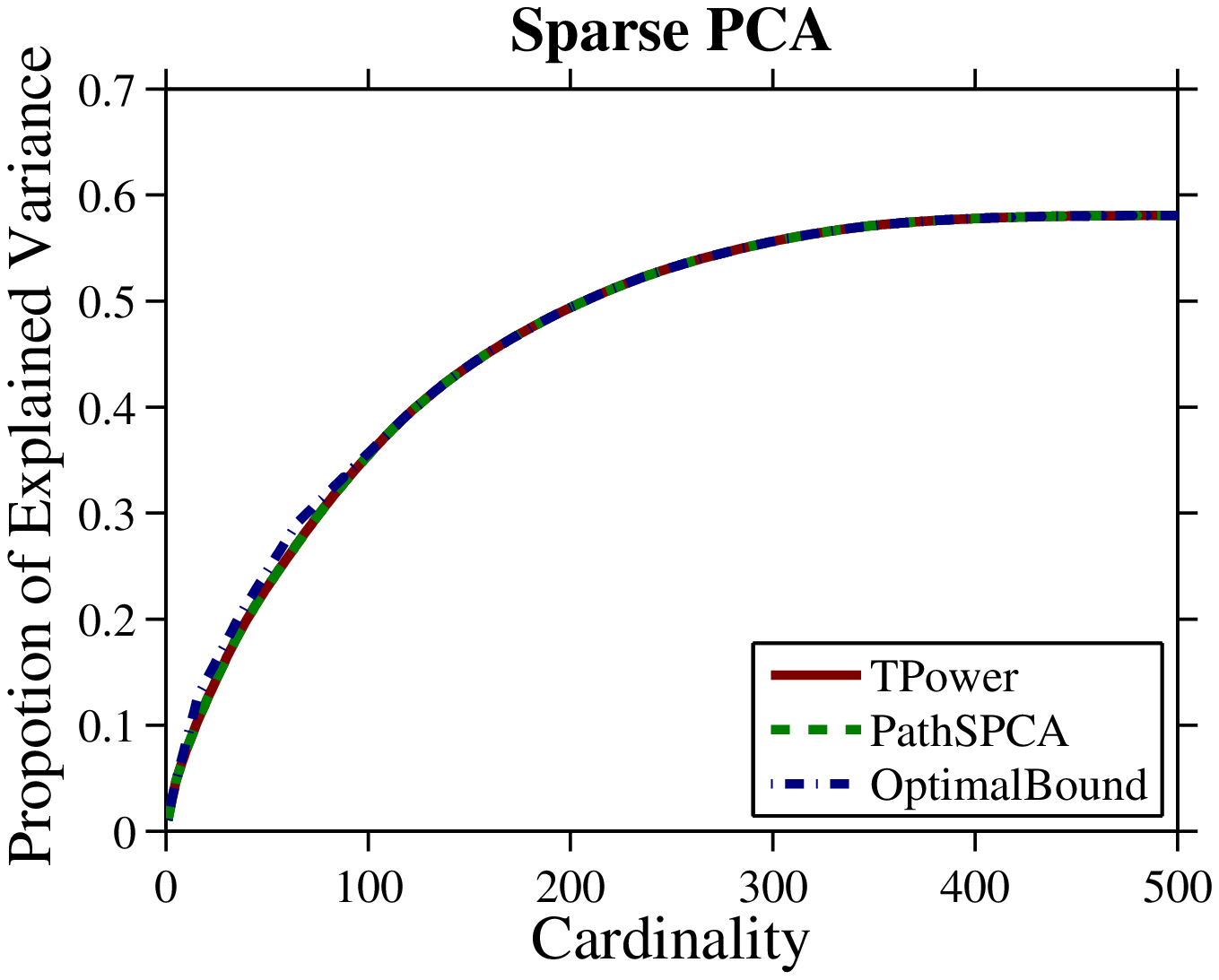}\label{fig:colon_cancer}}
\subfigure[\textsf{Lymphoma}]{\includegraphics[width=80mm]{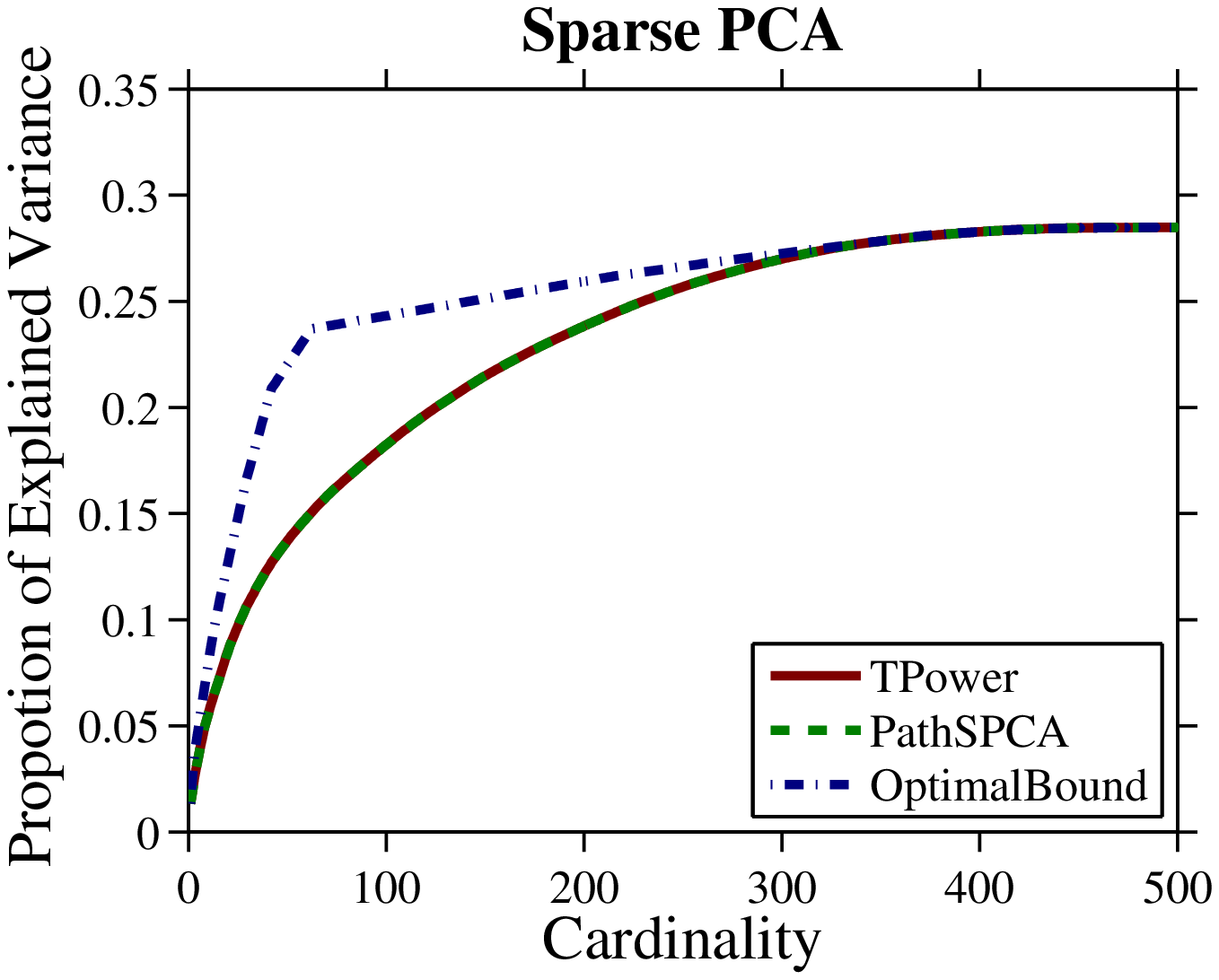}\label{fig:toy_outlier_removal}}
\caption{The synthetic dataset and outlier removal result.  For
better viewing, please see the original pdf file.
}\label{fig:exp_path_biological}
\end{figure}

\subsubsection{Results on Document Data}

In this section we evaluate the practical performance of TPower for
key terms extraction on a document dataset\textsf{ 20 Newsgroups}
(\textsf{20NG}). The
\textsf{20NG}~\footnote{\url{http://people.csail.mit.edu/jrennie/20Newsgroups/}}
is a dataset collected and originally used for document
classification by~\citet{Lang-1995}. A total number of $18,846$
documents, evenly distributed across 20 classes, are left after
removing duplicates and newsgroup-identifying headers. This corpus
contains $26,214$ distinct terms after stemming and stop word
removal. Each document is then represented as a term-frequency
vector and normalized to one. We use the top 1,000 terms according
to the DF (document frequency) of the terms in the corpus. We
extract 5 sparse PCs on this dataset. The cardinality setting for
the 5 sparse PCs is 20-20-10-10-10. Table~\ref{Tab:spca_ng20} lists
the terms associated with the 1st, 2nd and 5th sparse PCs. The
interpretation is quite clear: the 1st sparse PC is about figures,
the 2nd is about computer science, and the 5th is on religion. We
have observed that quite similar terms are extracted by PathSPCA
under the same cardinality setting. Here we do not list the 3rd and
4th PCs since they overlap with the listed ones due to the
non-orthogonality of sparse PCs.

\begin{table}[htb]
\begin{center}
\caption{Terms associated with the first 1st, 2nd, 5th sparse PCs.
Dataset: \textsf{20NG}. \label{Tab:spca_ng20}}
\begin{tabular}{ c c c }
\hline
1st PC (20 terms) &  2nd PC (20 terms)  & 5th PC (10 terms) \\
\hline \hline \hline
``1''& ``edu'' & ``dont''  \\

``2'' & ``system'' & ``time'' \\

``3'' & ``inform'' & ``people''   \\

``4'' & ``includ'' & ``believ''  \\

``5'' & ``support'' & ``god''  \\

``10'' & ``program'' & ``exist''   \\

``20'' & ``version'' & ``christan''  \\

``15'' & ``set'' & ``jesus''   \\

``8'' &``window'' & ``christ''   \\

``6''& ``softwar'' & ``atheist''   \\

``16''& ``avail''  &    \\

``12''& ``file''  &  \\

``14''& ``data''  &  \\

``7''& ``user''  &   \\

``18''& ``grafic''  &   \\

``9''& ``color''  &   \\

``13'' & ``imag''  &   \\

``11'' & ``displai''  &   \\

``0''& ``format''  &   \\

``la''& ``ftp''  &   \\
\hline
\end{tabular}
\end{center}
\end{table}

\subsubsection{Summary}

To summarize this group of experiments on sparse PCA, the basic
finding is that TPower performs quite competitively in terms of the
trade-off between explained variance and representation sparsity.
The performance is comparable to
PathSPCA~\citep{Aspremont-PathSPCA-2008} and
GPower~\citep{Journee-2010} both on the synthetic and on the real
datasets. It is observed that TPower, PathSPCA and GPower outperform
SPCA~\citep{Zou-2006} on the benchmark data \textsf{Pitprops}.
Although performing quite similarly, TPower, PathSPCA and GPower are
different algorithms: TPower is a power iteration method while
PathSPCA is a greedy forward selection method, both directly address
the cardinality constrained sparse eigenvalue
problem~\eqref{prob:spase_eigenvalue}, while GPower is a power
iteration method for certain regularized versions of sparse
eigenvalue problem (see the previous
Section~\ref{ssect:connections}). While strong theoretical guarantee
can be established for the TPower method, it remains open to show
that PathSPCA and GPower have a similar sparse recovery performance.
% Moreover, PathPCA requires the data
% matrix to be available (otherwise it requires a symmetric square root
% decomposition of covariance matrix). The TPower method, however,
% directly handles the covariance matrix and thus is more attractive
% when only covariance matrix is given and the feature dimension $p$
% is large.

\subsection{Densest $k$-Subgraph Finding}
\label{ssect:dense_k_subgraph}

As another concrete application, we show that with proper
modification, TPower can be applied to the densest $k$-subgraph
finding problem. Given an undirected graph $G=(V,E)$, $|V|=n$, and
integer $1 \le k \le n$, the densest $k$-subgraph (DkS) problem is
to find a set of $k$ vertices with maximum average degree in the
subgraph induced by this set. In the weighted version of DkS we are
also given nonnegative weights on the edges and the goal is to find
a $k$-vertex induced subgraph of maximum average edge weight.
Algorithms for finding DkS are useful tools for analyzing networks.
In particular, they have been used to select features for
ranking~\citep{Geng-2007}, to identify cores of
communities~\citep{KUMAR-1999}, and to combat link
spam~\citep{GIBSON-2005}.

It has been shown that the DkS problem is NP hard for bipartite
graphs and chordal graphs~\citep{Corneil-1984}, and even for graphs
of maximum degree three~\citep{Feige-2001}. A large body of
algorithms have been proposed based on a variety of techniques
including greedy
algorithms~\citep{Feige-2001,Asahiro-2002,Ravi-1994}, linear
programming~\citep{Billionnet-2004,Khuller-2009}, and semidefinite
programming~\citep{Srivastav-1998,Ye-2003}. For general $k$, the
algorithm developed by \citet{Feige-2001} achieves the best
approximation ratio of $O(n^\epsilon)$ where $\epsilon < 1/3$.
\citet{Ravi-1994} proposed 4-approximation algorithms for weighted
DkS on complete graphs for which the weights satisfy the triangle
inequality. \citet{Liazi-2008} has presented a 3-approximation
algorithm for DkS for chordal graphs. Recently, \citet{Jiang-2010}
proposed to reformulate DkS as a 1-mean clustering problem and
developed a $2$-approximation to the reformulated clustering
problem. Moreover, based on this reformulation, \citet{Yang-2010}
proposed a $1+\epsilon$-approximation algorithm with certain
exhaustive (and thus expensive) initialization procedure. In
general, however, \citet{Khot-2006} showed that DkS has no
polynomial time approximation scheme (PTAS), assuming that there are
no sub-exponential time algorithms for problems in NP.

Mathematically, DkS can be restated as the following binary
quadratic programming problem:
\begin{equation}\label{prob:DkS}
\max_{\pi \in \mathbb{R}^n} \pi^\top W\pi, \ \ \ \ \ \ \
\text{subject to } \pi \in \{1,0\}^n, \|\pi\|_0 = k,
\end{equation}
where $W$ is the (non-negative weighted) adjacency matrix of $G$. If
$G$ is an undirected graph, then $W$ is symmetric. If $G$ is
directed, then $A$ could be asymmetric. In this latter case, from
the fact that $\pi^\top  W \pi = \pi^\top  \frac{W + W^\top }{2}
\pi$, we may equivalently solve Problem~\eqref{prob:DkS} by
replacing $W$ with $\frac{W + W^\top }{2}$. Therefore, in the
following discussion, we always assume that the affinity matrix $W$
is symmetric (or $G$ is undirected).

\subsubsection{The TPower-DkS Algorithm}
We propose the TPower-DkS algorithm as a slight modification of
TPower, to solve the DkS problem. The process generates a sequence
of intermediate vectors $\pi_0,\pi_1,...$ from a starting vector
$\pi_0$. At each step $t$ the vector $\pi_{t-1}$ is multiplied by
the matrix $W$, then $\pi_t$ is set to be the indicator vector of
the top $k$ entries in $W\pi_{t-1}$. The TPower-Dks is formally
given in Algorithm~\ref{alg:sparse_power_dks}.
% Similar to the discussion in
% Section~\ref{ssect:bounding_optimization}, it can be proved that
% TPower-DkS is a monotonically increasing procedure. Particularly, if
% $\lambda_{\min}(A,2k)>0$, we have the following finite termination
% guarantee of Algorithm~\ref{alg:sparse_power_dks}.

% \begin{proposition}\label{thrm:strong_spapo_dks}
% Let $x_*$ be the optimal solution to \eqref{prob:DkS}. If $A$ is
% positive semidefinite with $\lambda_{\min}(A,2k)>0$, then Algorithm
% 2 will terminate with $x_{t} = x_{t-1}$ after at most
% \begin{equation}\label{equat:t_finite_termination}
% t = \frac{Q(x_*) - Q(x_0)}{\lambda_{\min}(A, 2k)} - 1
% \end{equation}
% steps of iteration.
% \end{proposition}
% \begin{proof}
% Following the similar arguments in the Proposition
% in~\ref{prop:tpower_strong_convex}, for any $\epsilon >0$, if $t \ge
% \frac{2(Q(x_*) -Q(x_0))}{\lambda_{\min} (A, 2k) \epsilon^2} $, then
% we have
% \begin{equation}
% \min_{1 \le i \le t} \|x_{t} - x_{t-1}\| \le \epsilon. \nonumber
% \end{equation}
% Let $\epsilon = \sqrt{2}$. By the preceding inequality we get that
% if $t$ reaches the right hand side of
% \eqref{equat:t_finite_termination}, then there exists a time
% instance $i$ such that $\|x_{i} - x_{i-1}\| < \sqrt{2}$ which
% implies $x_{i} = x_{i-1}$. This in sequel terminates the iteration
% of Algorithm~\ref{alg:sparse_power_dks}.
% \end{proof}

\begin{algorithm}[h]
\SetKwInOut{Input}{Input} \SetKwInOut{Output}{Output}
\SetKwInOut{Parameters}{Parameters}
\SetKwInOut{Initialization}{Initialization}

\Input{$W \in \mathbb{S}^{n}_{+}$,, initial vector $\pi_0 \in \mathbb{R}^n$}
\Output{$\pi_t$}
\Parameters{cardinality $k \in \{1,...,n\}$}
\BlankLine Let $t=1$.

\Repeat{Convergence} {

Compute $\pi'_t = W\pi_{t-1}$.

Identify $F_{t} = \supp(\pi'_t,k)$ the index set of $\pi'_t$ with
top $k$ values.

Set $\pi_t$ to be 1 on the index set $F_t$, and 0 otherwise.

$t \leftarrow t + 1$.

 }

\caption{Truncated Power Method for DkS
(TPower-DkS)}\label{alg:sparse_power_dks}
\label{alg:sparse_power_eig}
\end{algorithm}

\begin{remark}
By relaxing the constraint $\pi \in \{0,1\}^n$ to
$\|\pi\|=\sqrt{k}$, we may convert the densest $k$-subgraph
problem~\eqref{prob:DkS} to the standard sparse eigenvalue
problem~\eqref{prob:spase_eigenvalue} (up to a scaling) and then
directly apply TPower (in Algorithm 1) for solution. Our numerical
experience shows that such a relaxation strategy also works
satisfactory in practice, although is slightly inferior to
TPower-DkS (in Algorithm 2) which directly addresses the original
problem.
\end{remark}

Note that in Algorithm~\ref{alg:sparse_power_dks} we require that
$W$ is positive semidefinite. The motivation of this requirement is
to guarantee the convexity of the objective in
problem~\eqref{prob:DkS}, and thus following the similar arguments
in~\citep{Journee-2010} it can be shown that the objective value
will be monotonically increasing during the iterations. In many
real-world DkS problems, however, it is often the case that the
affinity matrix $W$ is not positive semi-definite. In this case, the
objective is non-convex and thus the monotonicity of TPower-DkS does
not hold. However, this complication can be circumvented by instead
running the algorithm with the shifted quadratic function:
\[
\max_{\pi \in \mathbb{R}^n}\pi^\top  (W + \tilde{\lambda} I_{p
\times p})\pi, \ \ \ \text{subject to } \pi \in \{0,1\}^n, \|\pi\|_0
= k.
\]
where $\tilde\lambda>0$ is large enough such that $\tilde{W}=W +
\tilde\lambda I_{p\times p} \in \mathbb{S}^n_{+}$. On the domain of
interest, this change only adds a constant term to the objective
function. The TPower-DkS, however, produces a different sequence of
iterates, and there is a clear trade-off. If the second term
dominates the first term (say by choosing a very large
$\tilde\lambda$), the objective function becomes approximately a
squared norm, and the algorithm tends to terminate in very few
iterations. In the limiting case of $\tilde\lambda \to \infty$, the
method will not move away from the initial iterate. To handle this
issue, we propose to gradually increase $\tilde\lambda$ during the
iterations and we do so only when the monotonicity is violated. To
be precise, if at a time instance $t$, $\pi_t^\top  W \pi_t <
\pi_{t-1}^\top  W \pi_{t-1}$, then we add $\tilde\lambda I_{p\times
p}$ to $W$ with a gradually increased $\tilde\lambda$ by repeating
the current iteration with the updated matrix until $\pi_t^\top  (W
+ \tilde\lambda I_{p\times p}) \pi_t \ge \pi_{t-1}^\top (W + \lambda
I_{p\times p}) \pi_{t-1}$~\footnote{Note that the inequality
$\pi_t^\top (W + \tilde\lambda I_{p\times p}) \pi_t \ge
\pi_{t-1}^\top (W + \tilde\lambda I_{p\times p}) \pi_{t-1}$ is
deemed to be satisfied when $\tilde\lambda$ is large enough, e.g.,
when $W+\tilde\lambda I_{p\times p} \in \mathbb{S}^n_+$.}, which
implies $\pi_t^\top W \pi_t \ge \pi_{t-1}^\top W \pi_{t-1}$.

\subsubsection{On Initialization}

Since TPower-DkS is a monotonically increasing procedure, it
guarantees to improve the initial point $\pi_0$. Basically, any
existing approximation DkS method, e.g., greedy
algorithms~\citep{Feige-2001,Ravi-1994}, can be used to initialize
TPower-DkS. In our numerical experiments, we observe that by simply
setting $\pi_{0}$ as the indicator vector of the vertices with the
top $k$ (weighted) degrees, our method can achieve very competitive
results on all the real-world datasets we have tested on.

\subsubsection{Results on Web Graphs}

We have tested TPower on four page-level web graphs:
\textsf{cnr-2000}, \textsf{amazon-2008}, \textsf{ljournal-2008},
\textsf{hollywood-2009}, from the WebGraph framework provided by the
Laboratory for Web Algorithms~\footnote{Datasets are available
at~\url{http://lae.dsi.unimi.it/datasets.php}}. We treated each
directed arc as an undirected edge.
Table~\ref{Tab:statistics_webgraph} lists the statistics of the
datasets used in the experiment.

\begin{table}[htb]
\begin{center}
\caption{The statistics of the web graph datasets.
\label{Tab:statistics_webgraph}}
\begin{tabular}{ c  c c c}
\hline Graph & Nodes ($|V|$) & Total Arcs ($|E|$) & Average Degree\\
\hline \hline
\textsf{cnr-2000} & 325,557 & 3,216,152 & 9.88 \\
\hline
\textsf{amazon-2008} & 735,323 & 5,158,388 & 7.02 \\
\hline
\textsf{ljournal-2008} & 5,363,260 & 79,023,142 & 14.73 \\
\hline
\textsf{hollywood-2009} & 1,139,905 & 113,891,327 & 99.91 \\
\hline
\end{tabular}
\end{center}
\end{table}

We compare our TPower-DkS method with two greedy methods for the DkS
problem. One greedy method is proposed by \citet{Ravi-1994} which is
referred to as Greedy-Ravi in our experiments. The Greedy-Ravi
algorithm works as follows: it starts from a heaviest edge and
repeatedly adds a vertex to the current subgraph to maximize the
weight of the resulting new subgraph; this process is repeated until
$k$ vertices are chosen. The other greedy method is developed
by~\citet[Procedure 2]{Feige-2001} which is referred as Greedy-Feige
in our experiments. The procedure works as follows: let $S$ denote
the $k/2$ vertices with the highest degrees in $G$; let $C$ denote
the $k/2$ vertices in the remaining vertices with largest number of
neighbors in $S$; return $S \cup C$.

Figure~\ref{fig:web_graph} shows the density value $\pi^\top W
\pi/k$ and CPU time versus the cardinality $k$. From the density
curves we can observe that on \textsf{cnr-2000},
\textsf{ljournal-2008} and \textsf{hollywood-2009}, TPower-DkS
consistently outputs denser subgraphs than the two greedy
algorithms, while on \textsf{amazon-2008}, TPower-DkS and
Greedy-Ravi are comparable and both are better than Greedy-Feige.
For CPU running time, it can be seen from the right column of
Figure~\ref{fig:web_graph} that Greedy-Feige is the fastest among
the three methods while TPower-DkS is only slightly slower. This is
due to the fact that TPower-DkS needs iterative matrix-vector
products while Greedy-Feige only needs a few degree sorting outputs.
Although TPower-DkS is slightly slower than Greedy-Feige, it is
still quite efficient. For example, on \textsf{hollywood-2009} which
has hundreds of millions of arcs, for each $k$, Greedy-Feige
terminates within about 1 second while TPower terminates within
about 10 seconds. The Greedy-Ravi method is however much slower than
the other two on all the graphs when $k$ is large.

\begin{figure}
\centering
%-------------------------------------Iteration 1 --------------------------------------------
\subfigure[cnr-2000]{\includegraphics[width=50mm]{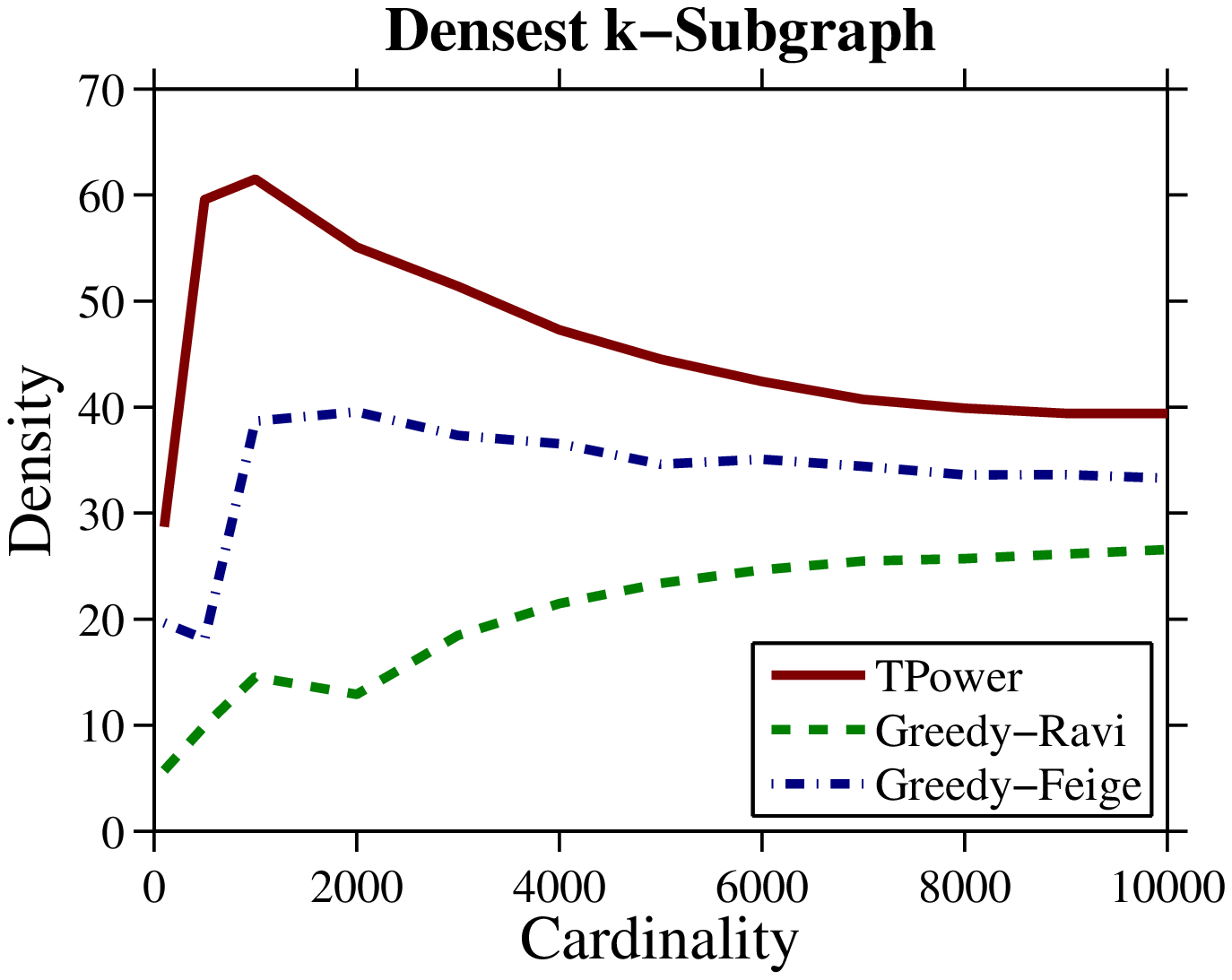}\label{fig:cnr_2000_density}
\includegraphics[width=50mm]{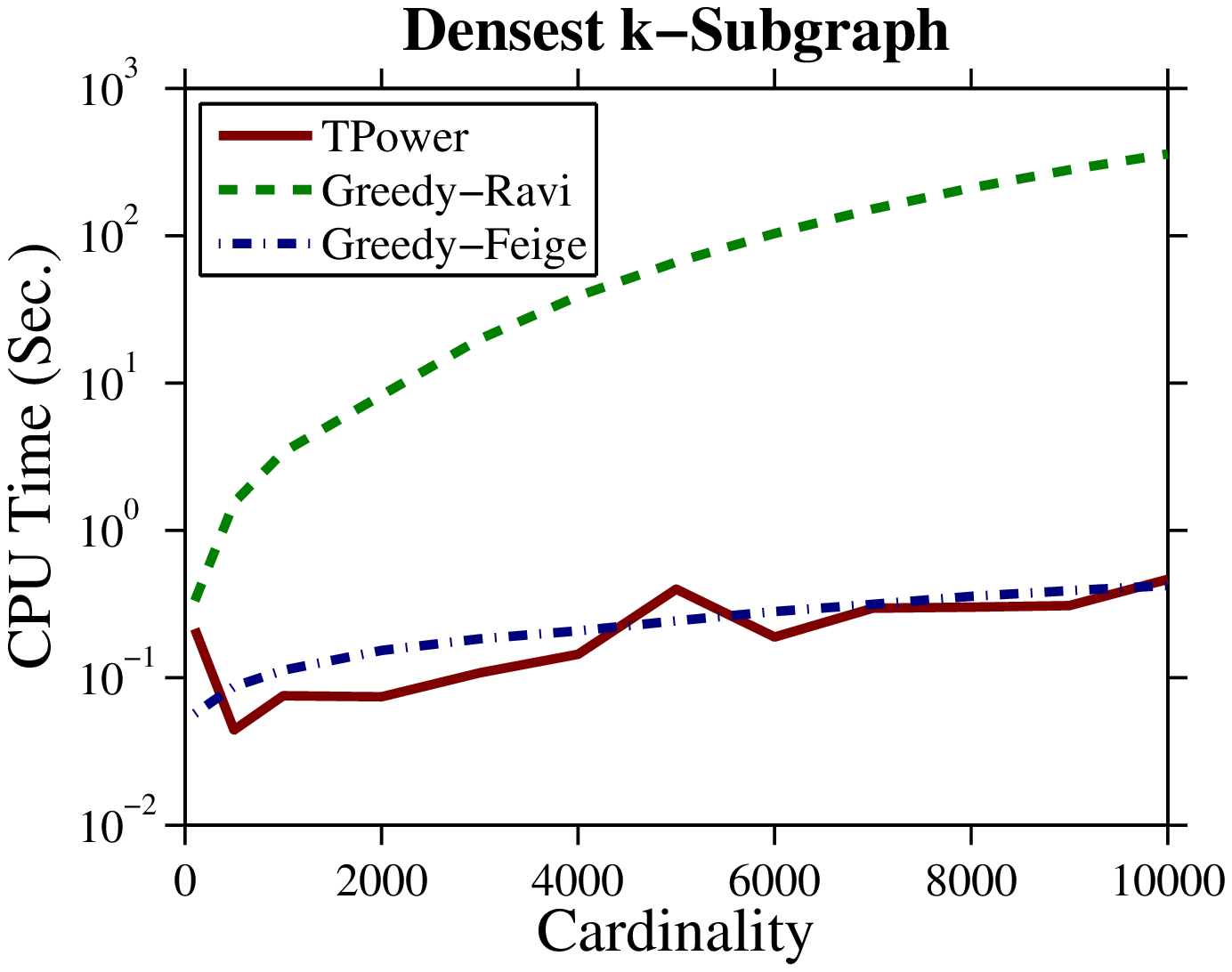}\label{fig:cnr_2000_cpu}}

\subfigure[amazon-2008]{\includegraphics[width=50mm]{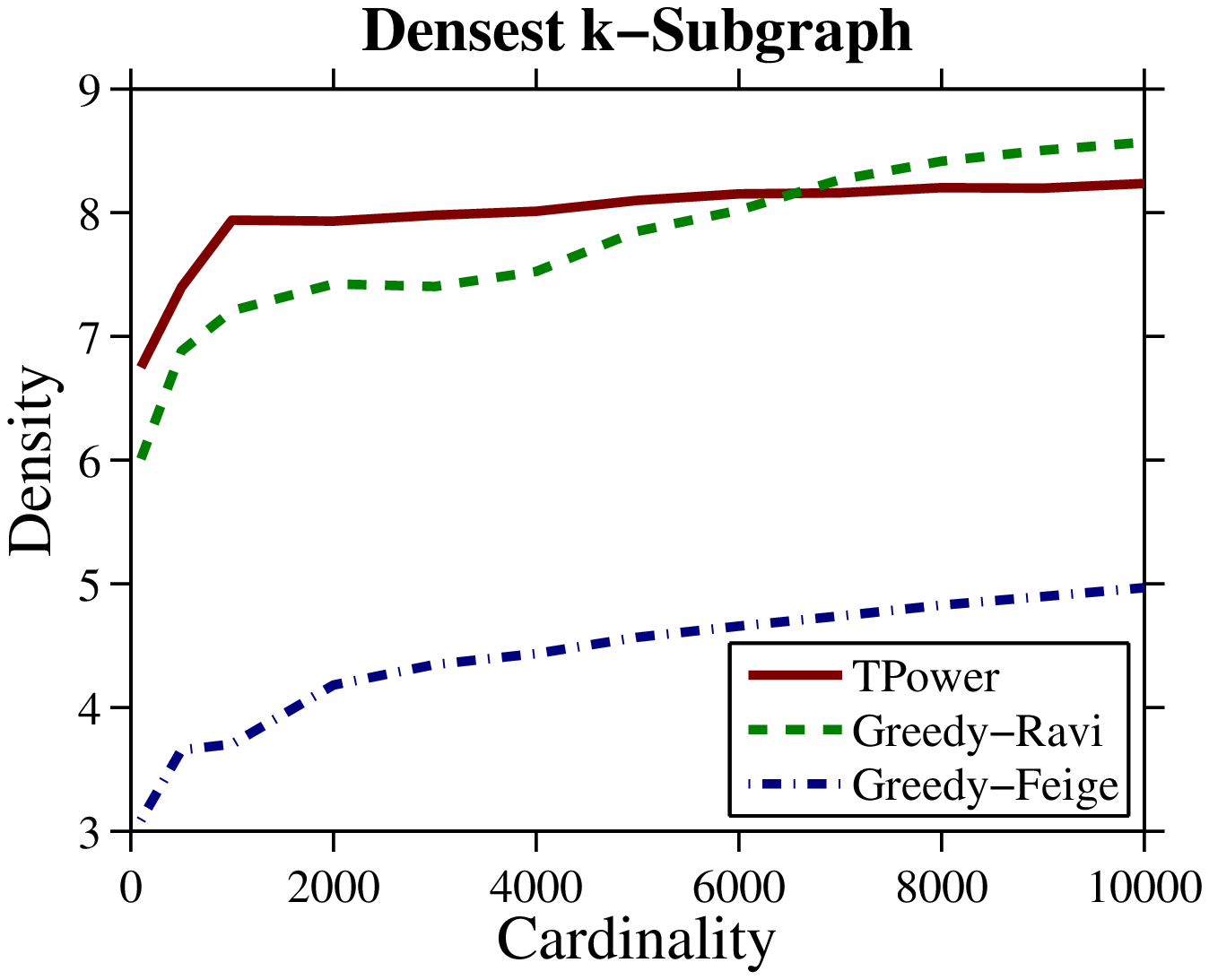}\label{fig:amazon_2008_density}
\includegraphics[width=50mm]{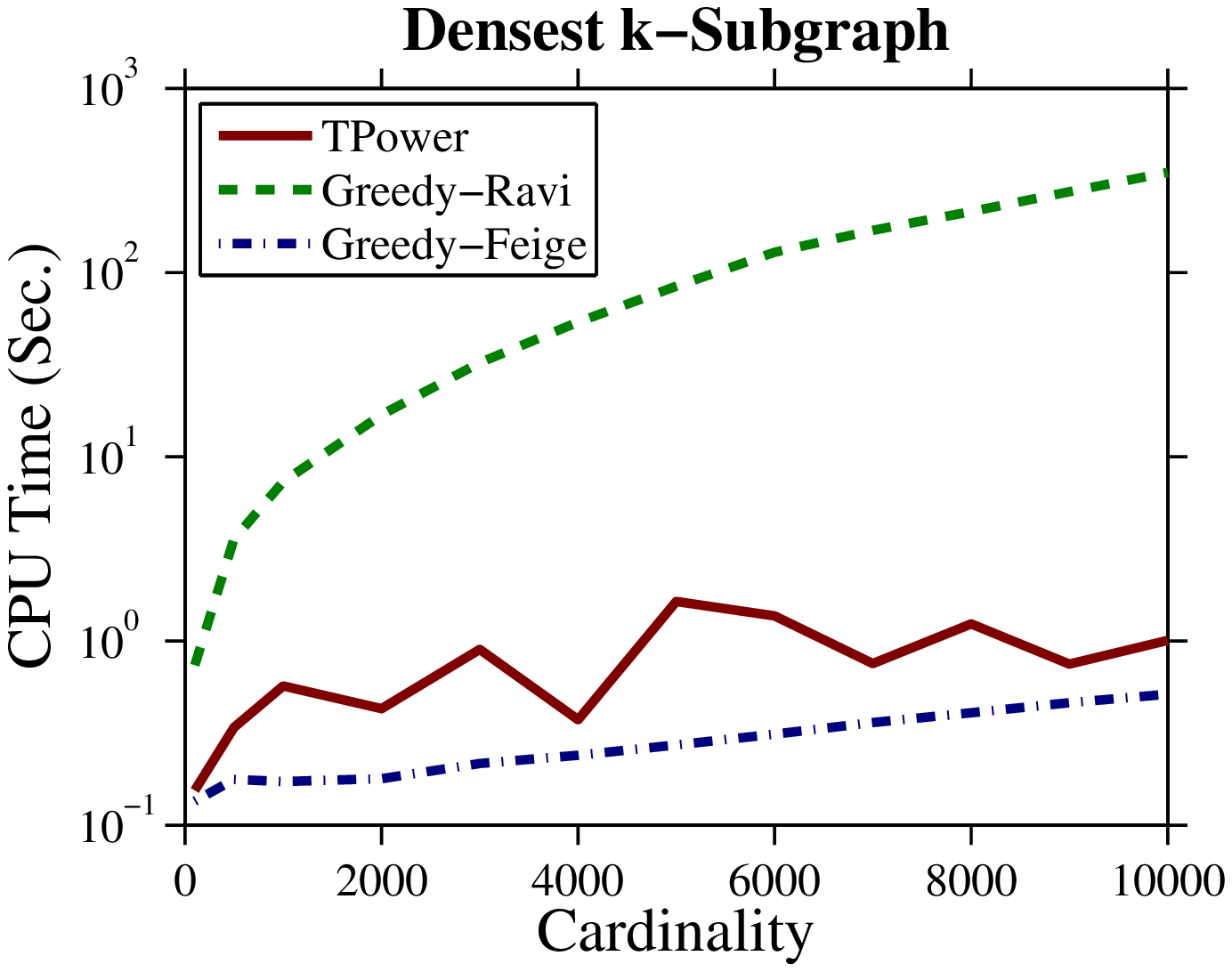}\label{fig:amazon_2008_cpu}}

\subfigure[ljournal-2008]{\includegraphics[width=50mm]{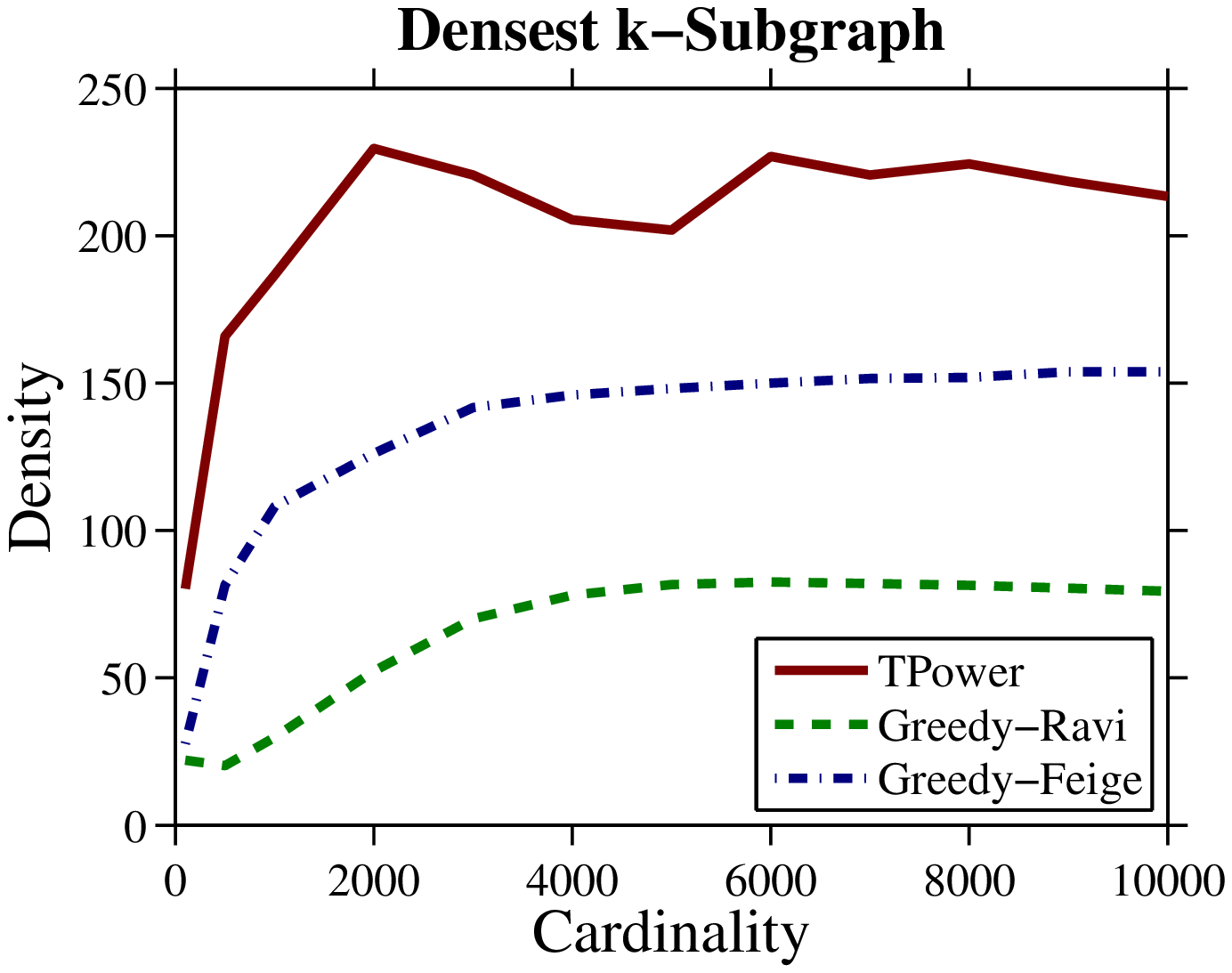}\label{fig:ljournal_2008_density}
\includegraphics[width=50mm]{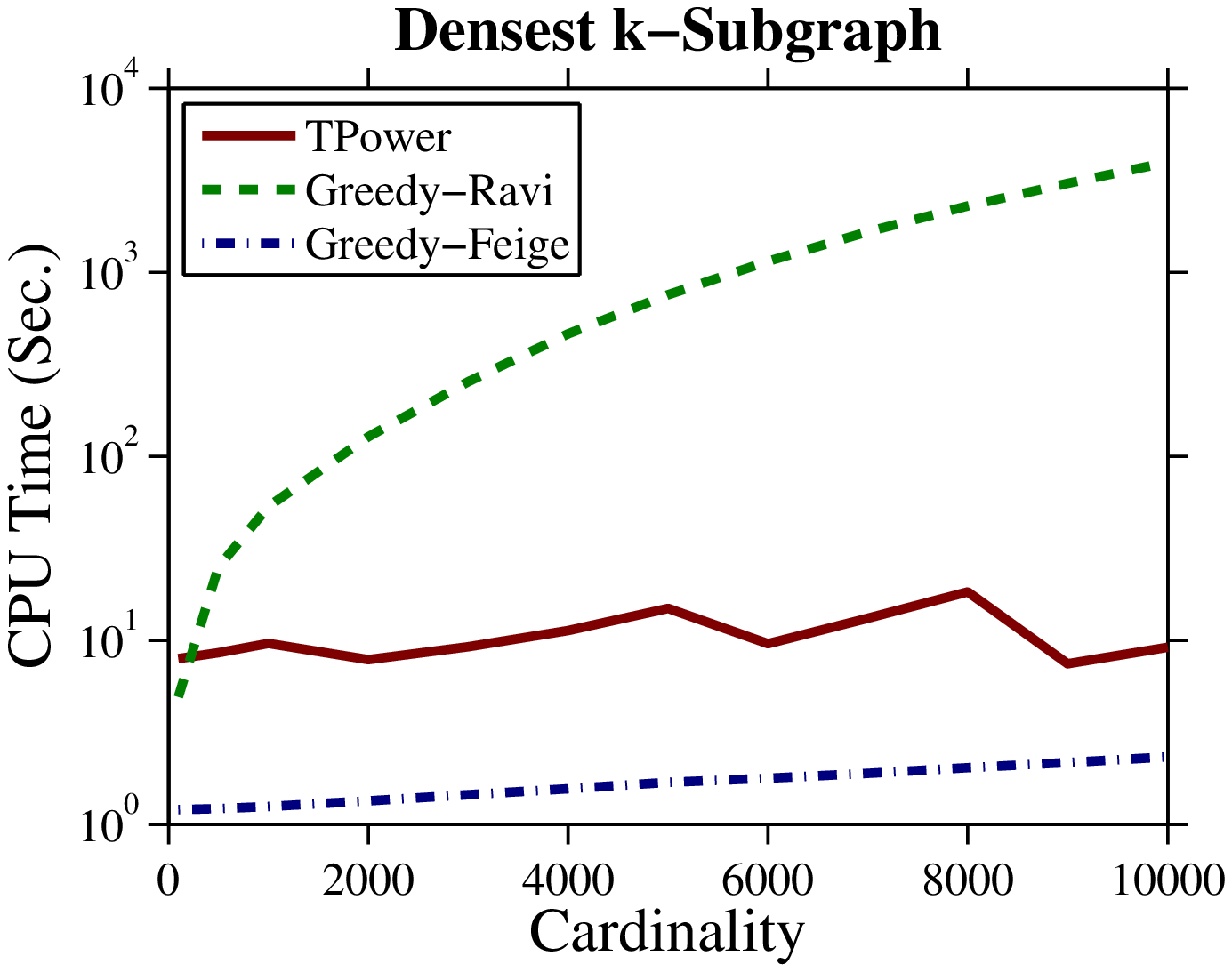}\label{fig:ljournal_2008_cpu}}

\subfigure[hollywood-2009]{\includegraphics[width=50mm]{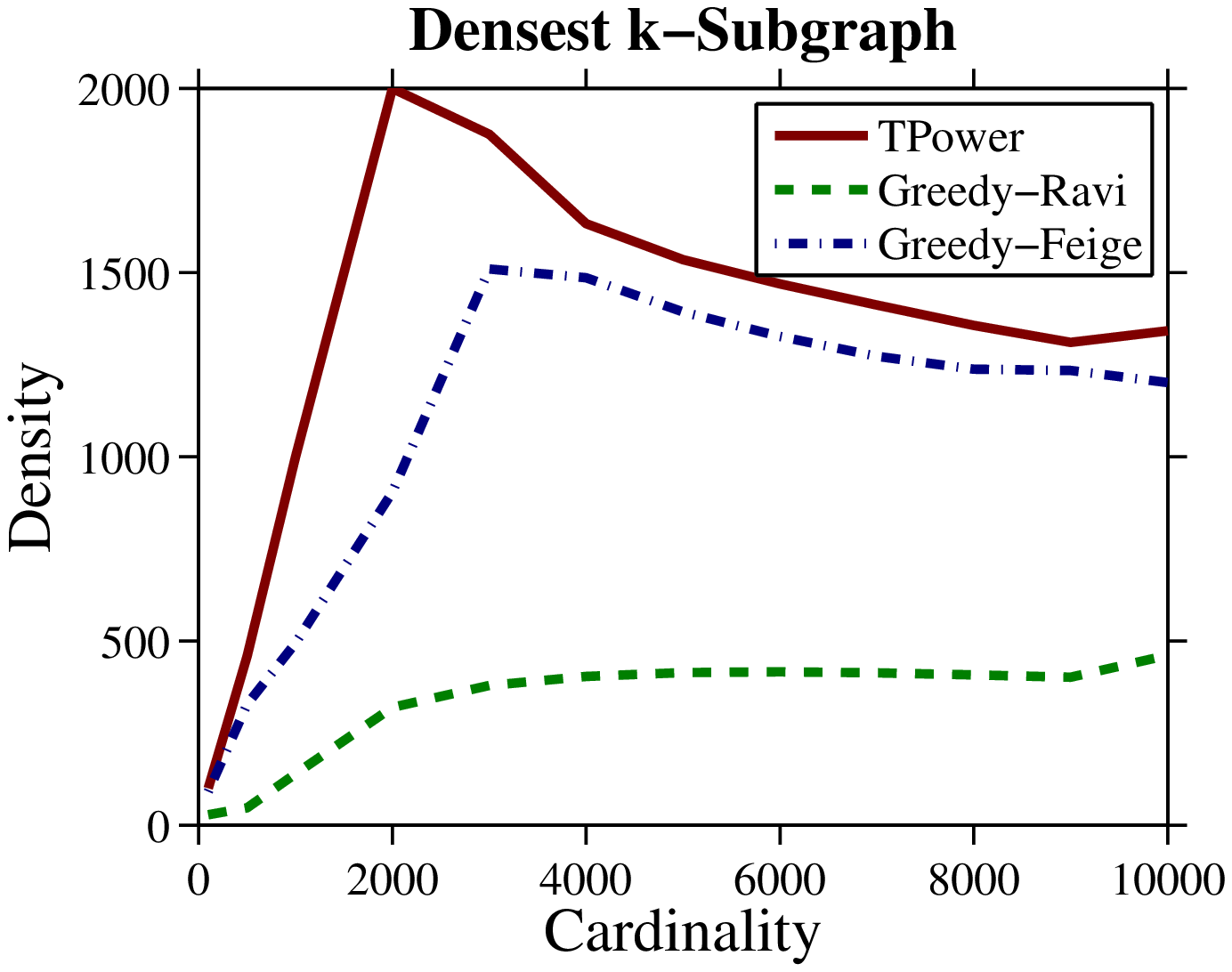}\label{fig:hollywood_2009_density}
\includegraphics[width=50mm]{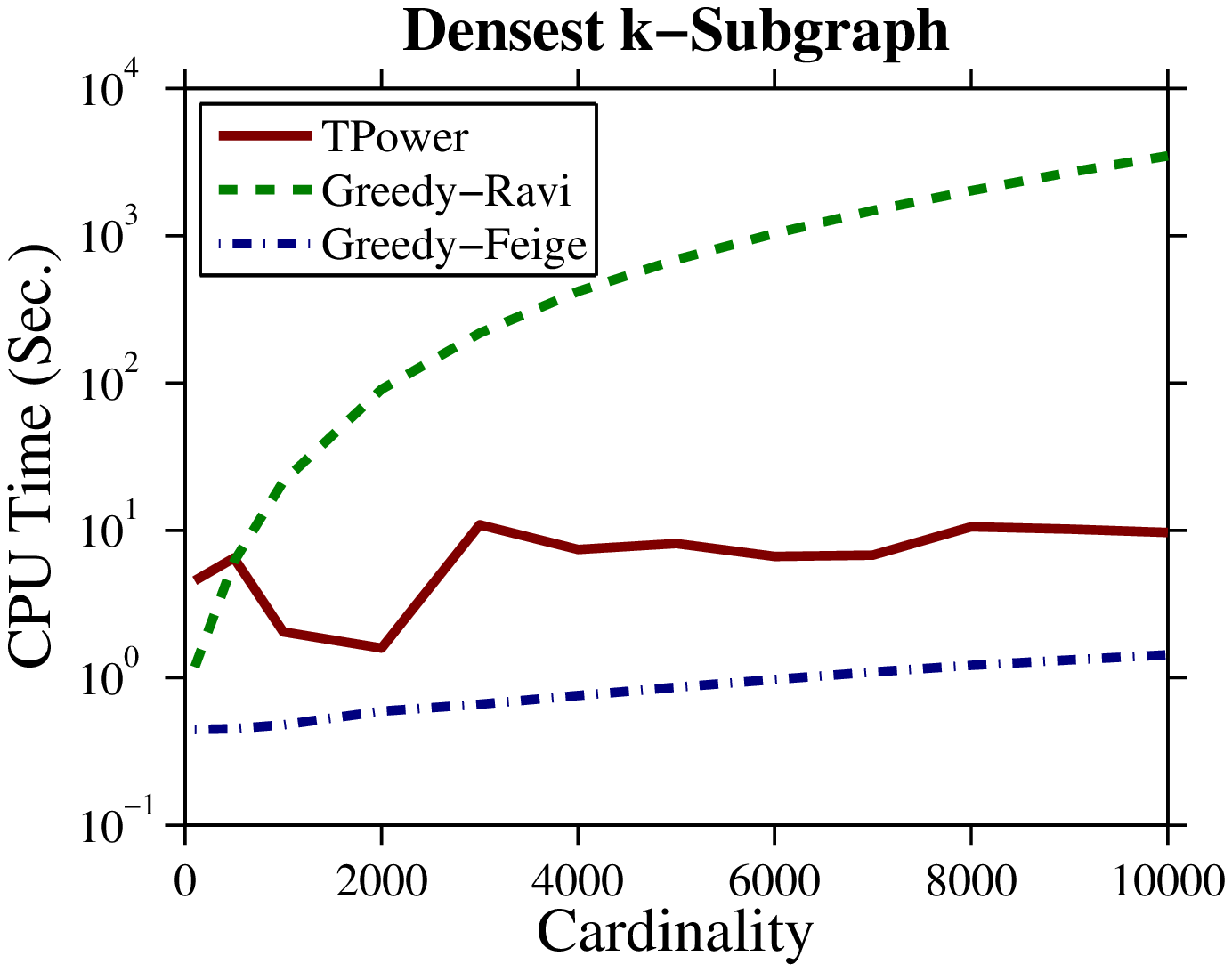}\label{fig:hollywood_2009_cpu}}

\caption{Identifying densest $k$-subgraph on four web graphs. Left:
density curves as a function of $k$. Right: CPU time curves as a
function of $k$. }\label{fig:web_graph}
\end{figure}

\subsubsection{Results on Air-Travel Routine}

We have applied TPower-DkS to identify subsets of American and
Canadian cities that are most easily connected to each other, in
terms of estimated commercial airline travel time. The
graph~\footnote{The data is available
at~\url{www.psi.toronto.edu/affinitypropogation}} is of size
$|V|=456$ and $|E|=71,959$: the vertices are $456$ busiest
commercial airports in United States and Canada, while the weight
$w_{ij}$ of edge $e_{ij}$ is set to the inverse of the mean time it
takes to travel from city $i$ to city $j$ by airline, including
estimated stopover delays. Due to the headwind effect, the transit
time can depend on the direction of travel; thus $36\%$ of the
weight are asymmetric. Figure~\ref{fig:route_map_all} shows a map of
air-travel routine.

As in the previous experiment, we compare TPower-DkS to Greedy-Ravi
and Greedy-Feige on this dataset. For all the three algorithms, the
densities of $k$-subgraphs under different $k$ values are shown in
Figure~\ref{fig:route_density_comparison}, and the CPU running time
curves are given in Figure~\ref{fig:route_cpu_comparison}. From the
former figure we observe that TPower-DkS consistently outperforms
the other two greedy algorithms in terms of the density of the
extracted $k$-subgraphs. From the latter figure we can see that
TPower-DkS is slightly slower than Greed-Feige but much faster than
Greedy-Ravi.
Figure~\ref{fig:route_hks_30_spapo},\ref{fig:route_hks_30_ravi}, and
\ref{fig:route_hks_30_feige} illustrate the densest $k$-subgraph
with $k=30$ outputted by the three algorithms. In each of these
three subgraph, the red dot indicates the representing city with the
largest (weighted) degree. Both TPower-DkS and Greedy-Feige reveal
30 cities in east US. The former takes \emph{Cleveland} as the
representing city while the latter \emph{Cincinnati}. Greedy-Ravi
reveals 30 cities in west US and CA and takes \emph{Vancouver} as
the representing city. Visual inspection shows that the subgraph
recovered by TPower-DkS is the densest among the three.

After discovering the densest $k$-subgraph, we can eliminate their
nodes and edges from the graph and then apply the algorithms on the
reduced graph to search for the next densest subgraph. Such a
sequential procedure can be repeated to find multiple densest
$k$-subgraphs.
Figure~\ref{fig:route_hks_30x6_spapo},\ref{fig:route_hks_30x6_ravi},
and \ref{fig:route_hks_30x6_feige} illustrate sequentially estimated
six densest $30$-subgraphs by the three algorithms. Again, visual
inspection shows that our method output more geographically compact
subsets of cities than the other two. As a quantitative result, the
total density of the six subgraphs discovered by the three
algorithms is: 1.14 (TPower-DkS), 0.90 (Greedy-Feige) and 0.99
(Greedy-Ravi), respectively.

\begin{figure}
\centering
%-------------------------------------Iteration 1 --------------------------------------------
\subfigure[The air-travel route
map]{\includegraphics[width=50mm]{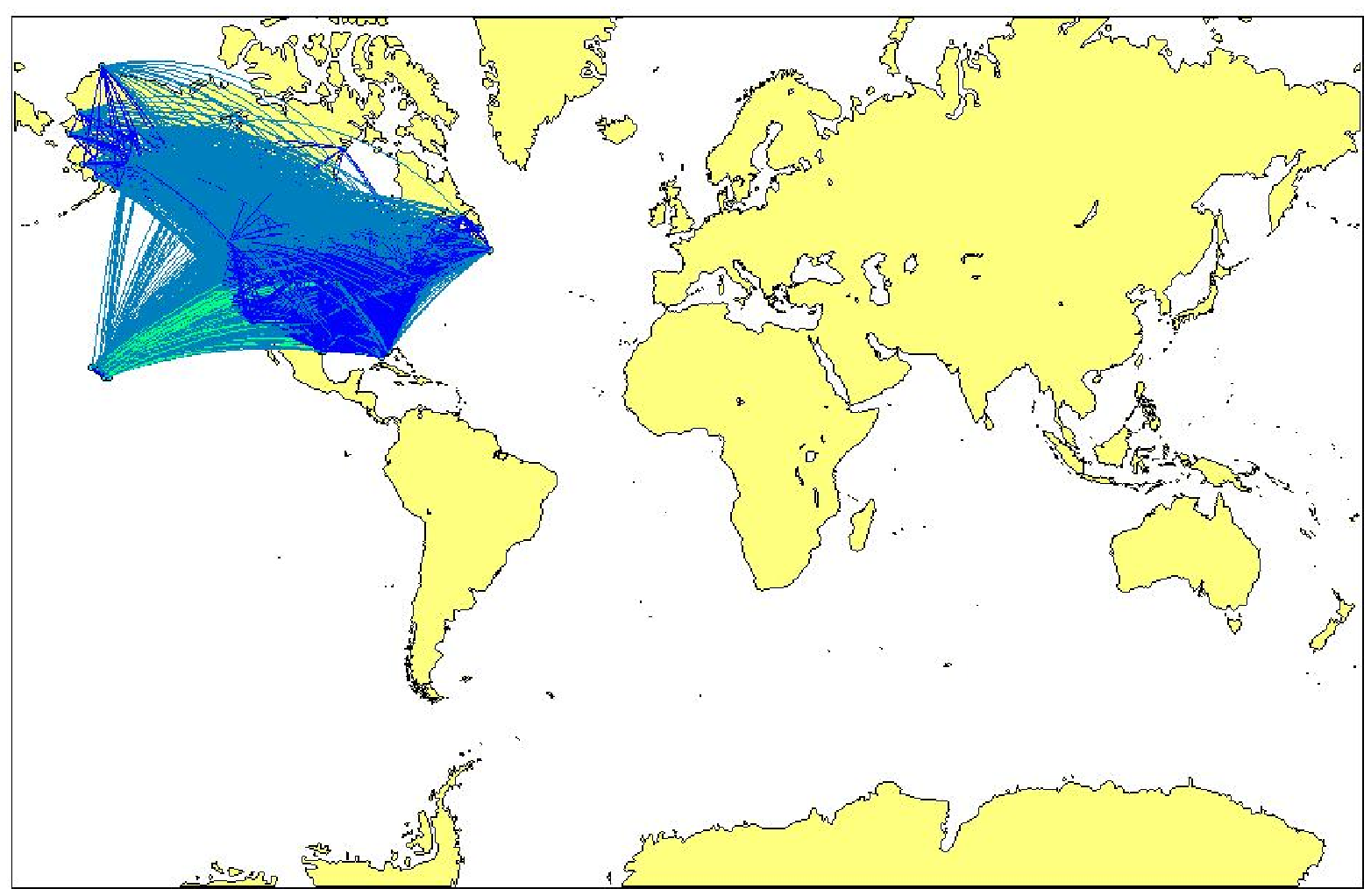}\label{fig:route_map_all}}
\subfigure[Density]{\includegraphics[width=50mm]{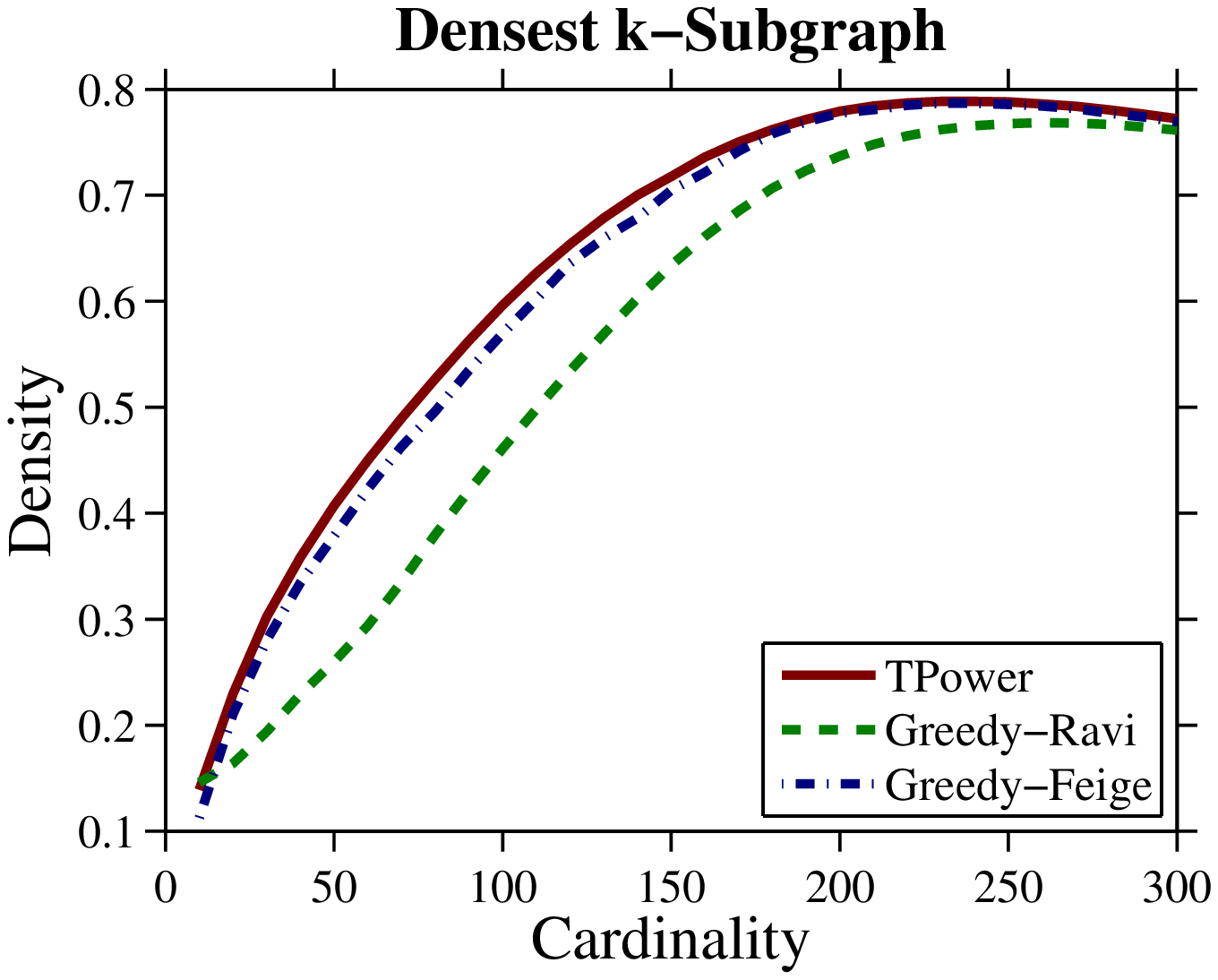}\label{fig:route_density_comparison}}
\subfigure[CPU
Time]{\includegraphics[width=50mm]{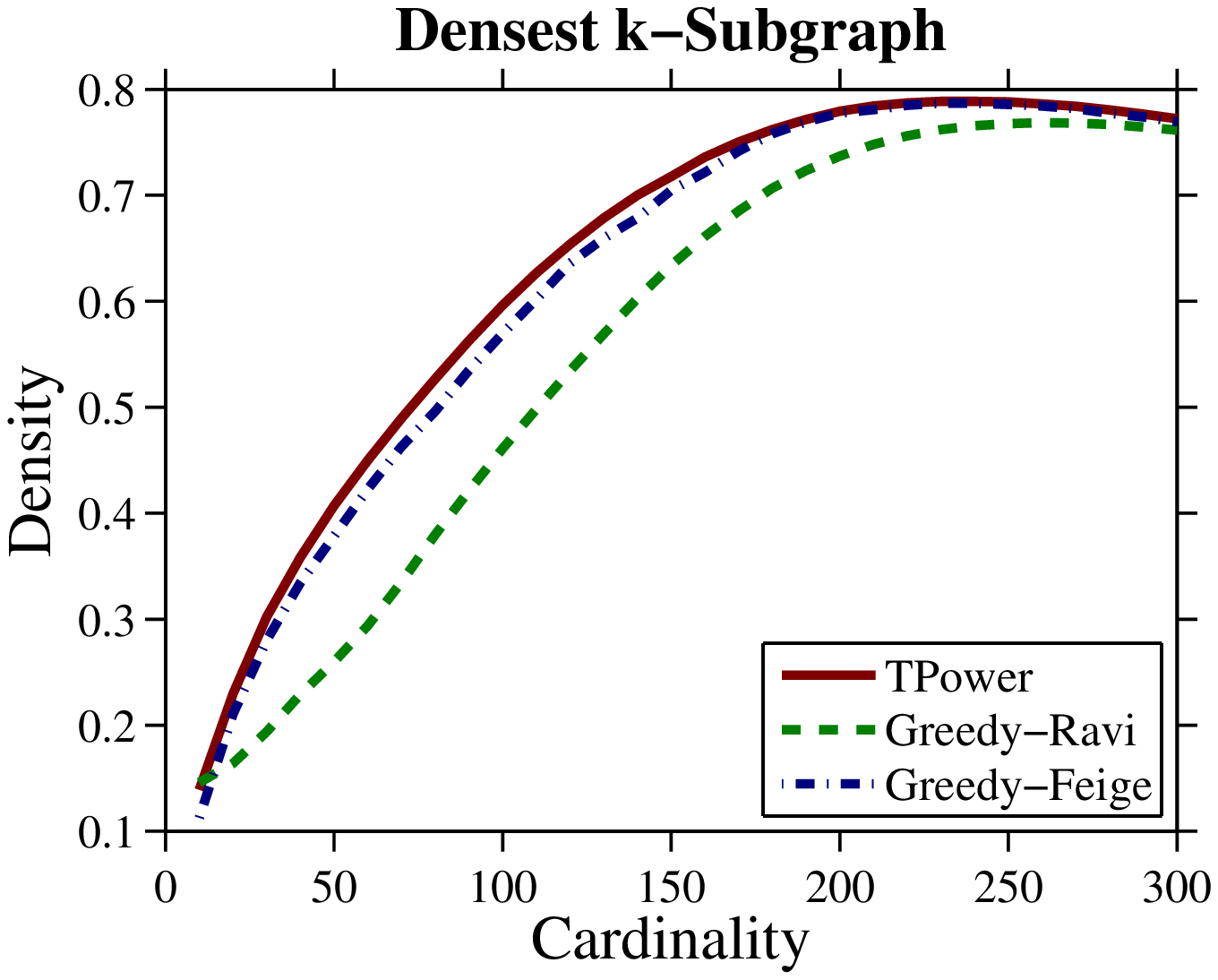}\label{fig:route_cpu_comparison}}

\subfigure[TPower-DkS]{\includegraphics[width=50mm]{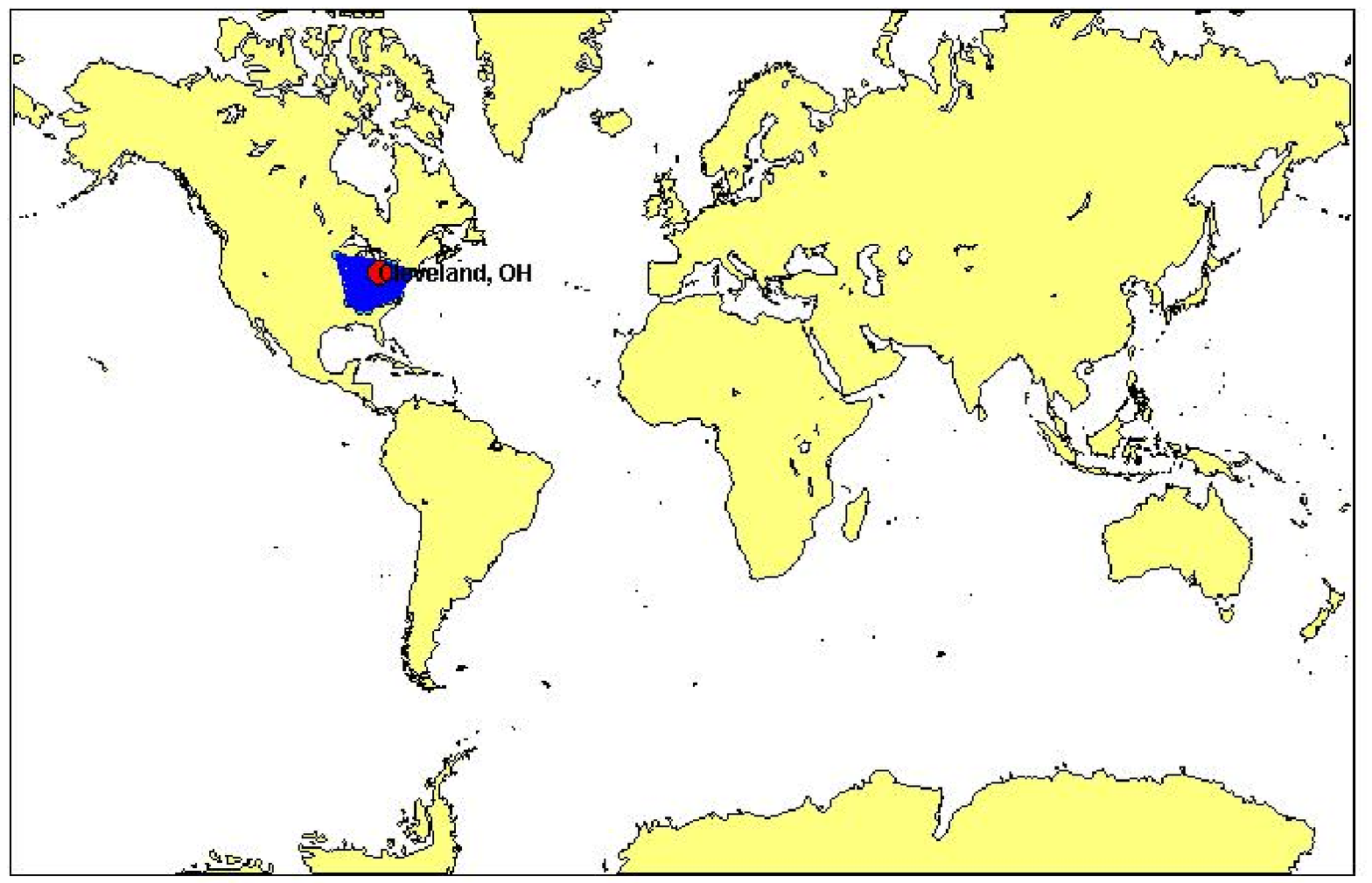}\label{fig:route_hks_30_spapo}}
\subfigure[Greedy-Ravi]{\includegraphics[width=50mm]{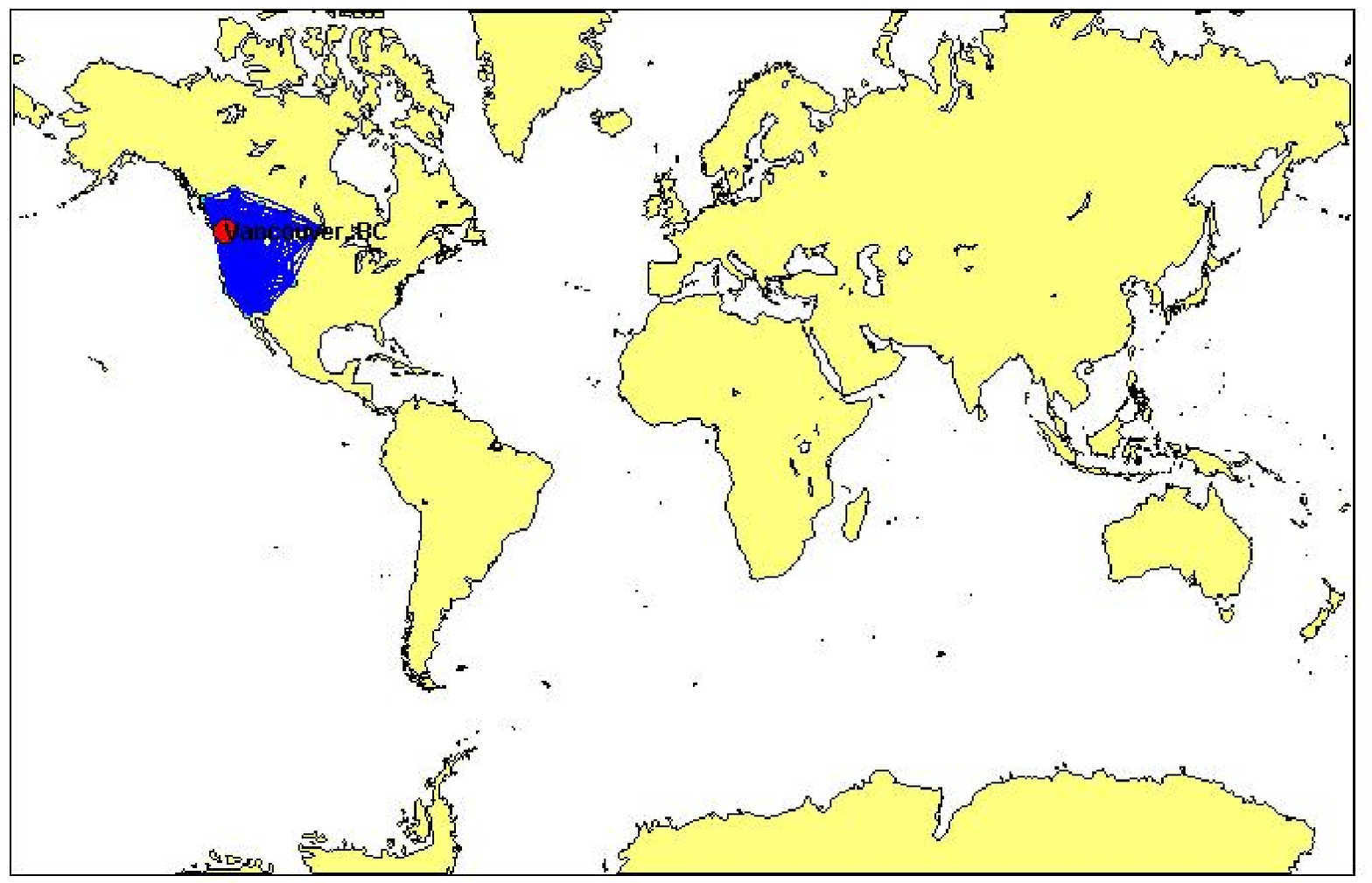}\label{fig:route_hks_30_ravi}}
\subfigure[Greedy-Feige]{\includegraphics[width=50mm]{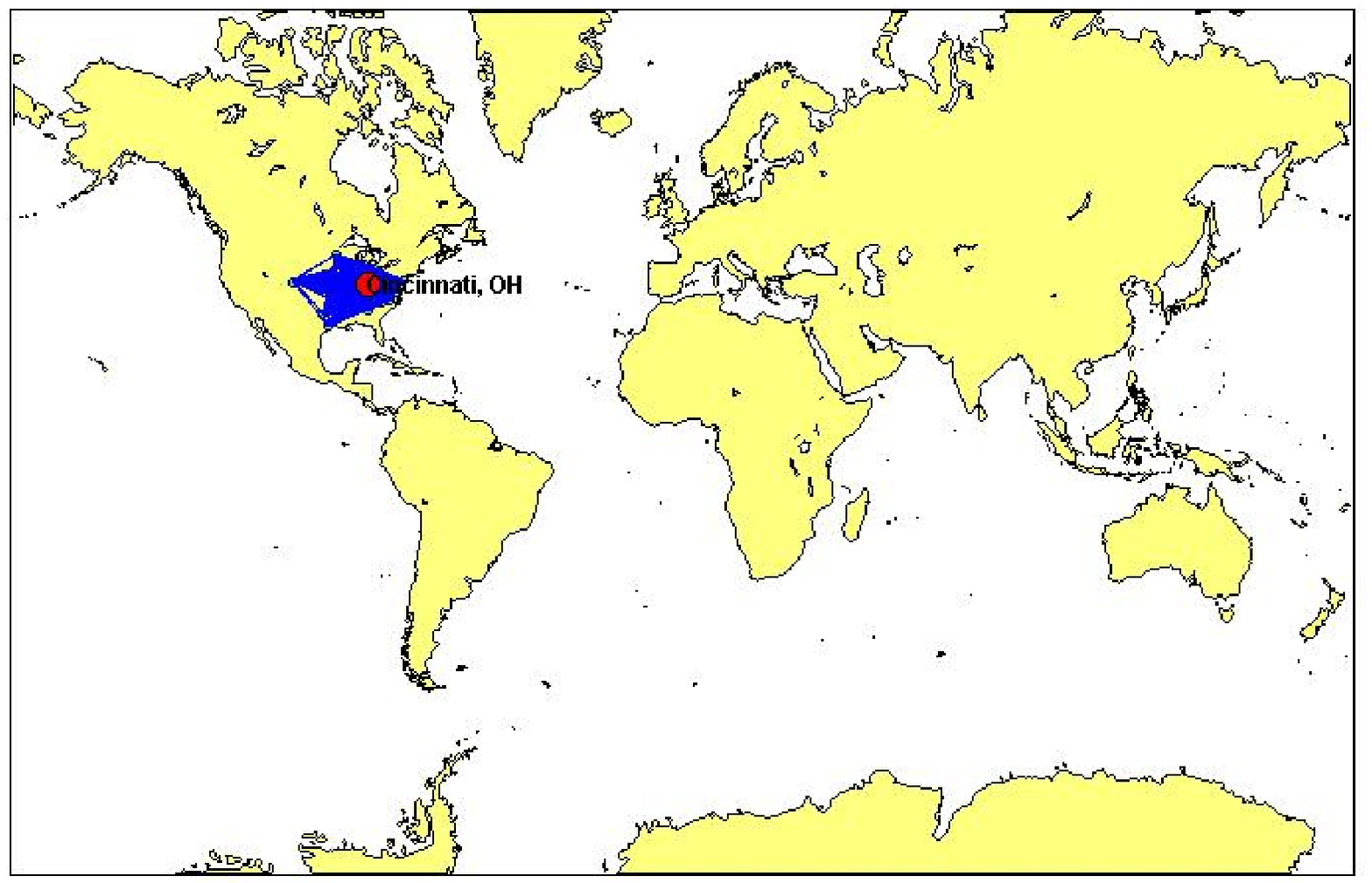}\label{fig:route_hks_30_feige}}

\subfigure[TPower-DkS]{\includegraphics[width=50mm]{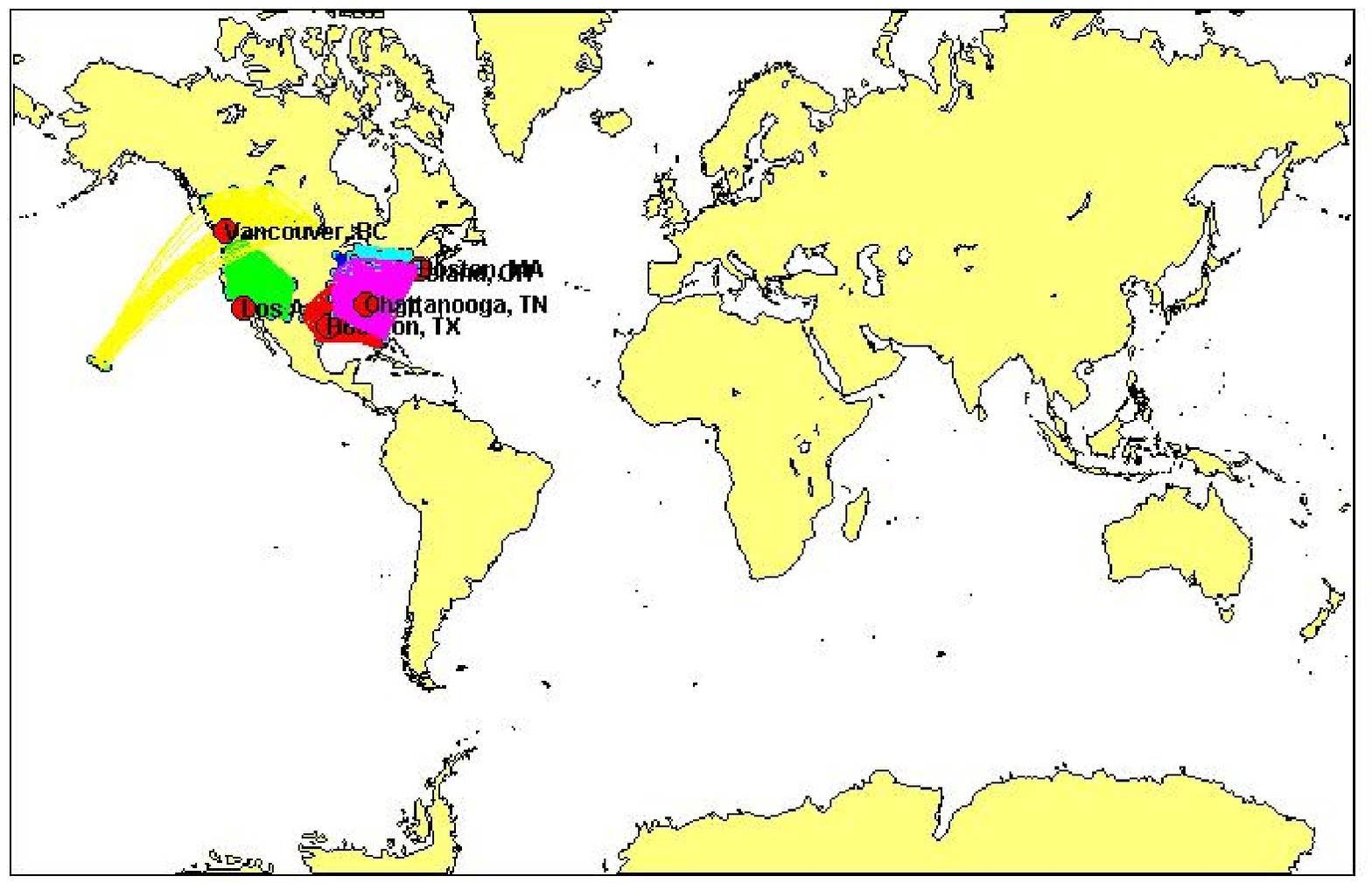}\label{fig:route_hks_30x6_spapo}}
\subfigure[Greedy-Ravi]{\includegraphics[width=50mm]{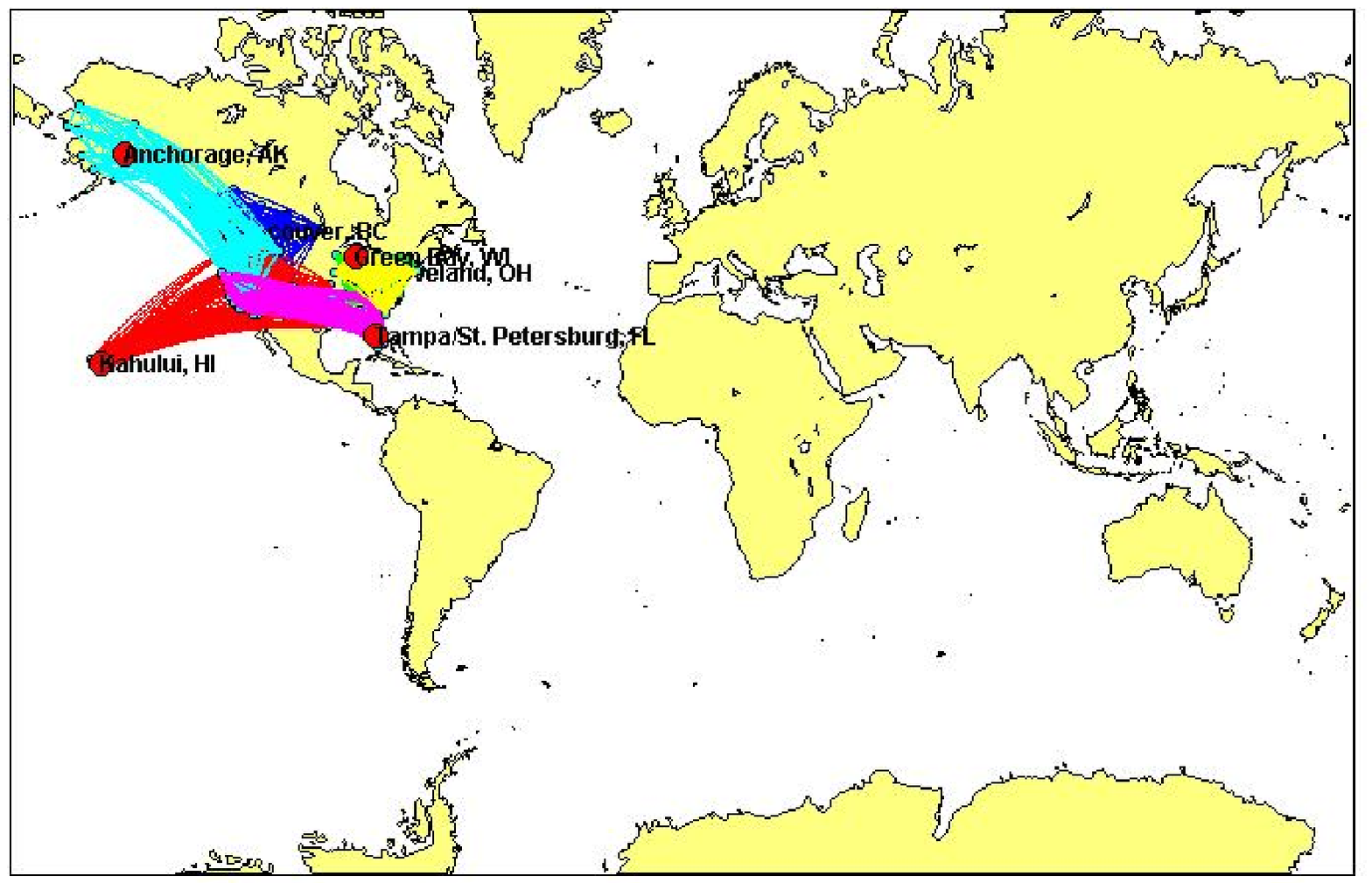}\label{fig:route_hks_30x6_ravi}}
\subfigure[Greedy-Feige]{\includegraphics[width=50mm]{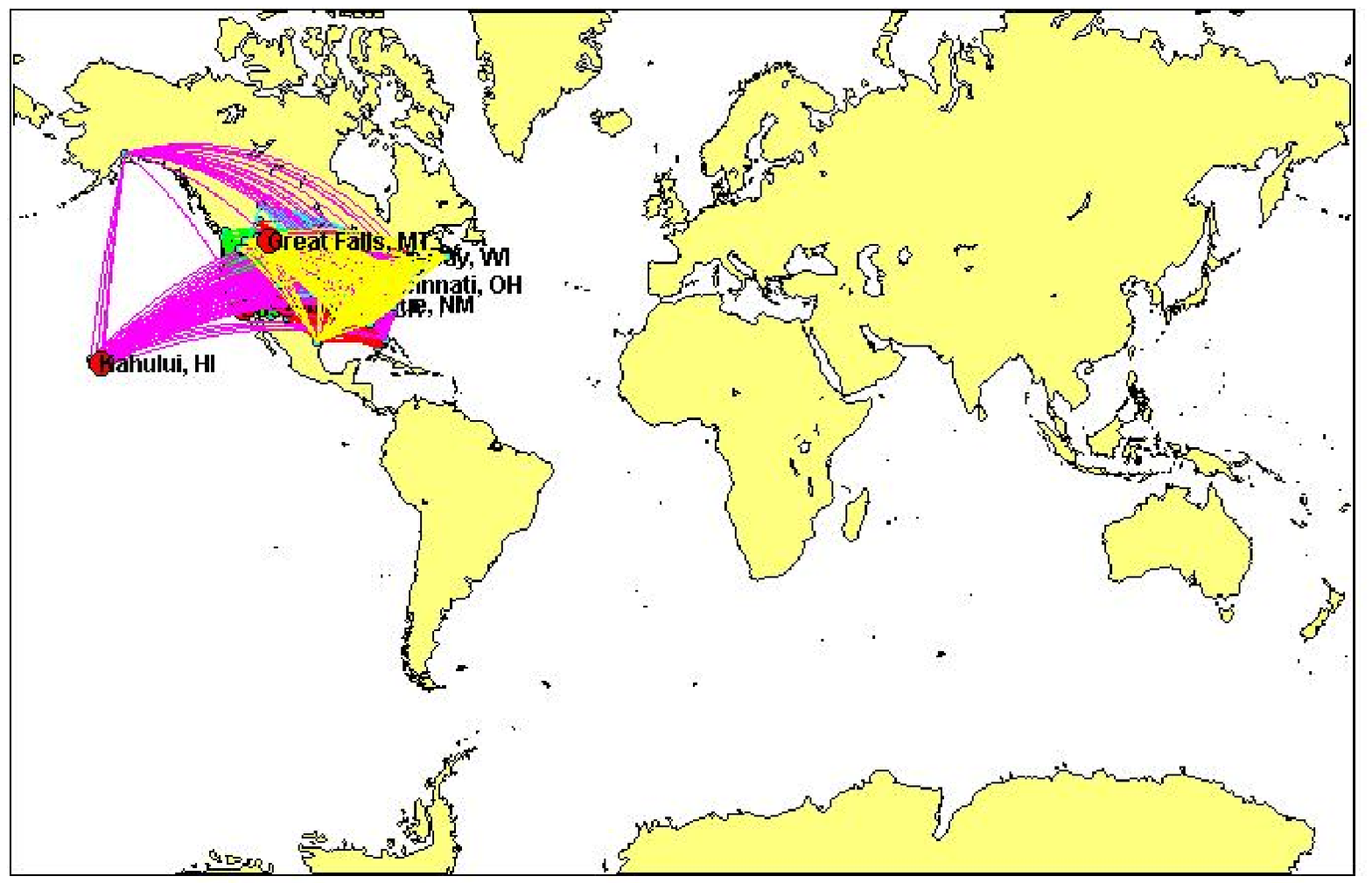}\label{fig:route_hks_30x6_feige}}
\caption{Identifying densest $k$-subgraph of air-travel routing. Row
1: Route map, and the density and CPU time evolving curves. Row 2:
The densest $30$-subgraph discovered by the three algorithms. Row 3:
Sequentially discovered six densest $30$-subgraph by the three
algorithms.}\label{fig:key_point_matches}
\end{figure}

\section{Conclusion and Future Work}
\label{sect:conclusion}

The sparse eigenvalue problem has been widely studied in machine
learning with applications such as sparse PCA. TPower is a truncated
power iteration method that approximately solves the nonconvex
sparse eigenvalue problem. Our analysis shows that when the
underlying matrix has sparse eigenvectors, under proper conditions
TPower can approximately recover the true sparse solution. The
theoretical benefit of this method is that with appropriate
initialization, the reconstruction quality depends on the restricted
matrix perturbation error at size $s$ that is comparable to the
sparsity $\bar{k}$, instead of the full matrix dimension $p$. This
explains why this method has good empirical performance. To our
knowledge, this is the first theoretical result of this kind,
although our empirical study suggests that it might be possible to
prove related sparse recovery results for some other algorithms we
have tested.

We have applied TPower to
two concrete applications: sparse PCA and the densest $k$-subgraph finding
problem. Extensive experimental results on synthetic and real-world
datasets validate the effectiveness and efficiency of the TPower algorithm.

%In future, we will study the statistical consistency of TPower with
%proper assumption on the matrix, e.g., the spiked covariance matrix
%model.

%\bibliographystyle{siam}
%\bibliography{template}

\bibliographystyle{icml2010}
\bibliography{mybib/journals,mybib/books,mybib/proceedings,mybib/technical_reports,mybib/SparseEig}

\renewcommand{\appendixpagename}{Appendix}
\appendices
\appendixpage

\makeatletter
\renewcommand\theequation{A.\@arabic\c@equation }
\makeatother \setcounter{equation}{0}

\section{Proof of Theorem~\ref{thm:recovery}}
\label{append:proof}

We state the following standard result from the perturbation theory
of symmetric eigenvalue problem. It can be found for example in
\citep{GolubVan96}.
\begin{lemma}
  If $B$ and $B+U$ are $p \times p$ symmetric matrices, then $\forall 1 \le k \le
  p$,
  \[
  \lambda_k(B) + \lambda_p(U) \leq \lambda_k(B+U) \leq \lambda_k(B) + \lambda_1(U) ,
  \]
  where $\lambda_k(B)$ denotes the $k$-th largest eigenvalue of matrix $B$.
  \label{lem:eigenvalue}
\end{lemma}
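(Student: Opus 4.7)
The plan is to use the Courant-Fischer min-max characterization of eigenvalues for symmetric matrices, which is the standard route to Weyl-type perturbation inequalities. Recall that for any $p\times p$ symmetric matrix $M$ and any $1 \le k \le p$,
\[
\lambda_k(M) = \max_{\substack{S \subseteq \mathbb{R}^p \\ \dim S = k}} \; \min_{\substack{x \in S \\ \|x\|=1}} x^\top M x .
\]
I would take this as the starting point and apply it to $M = B+U$, $M = B$, and $M = U$.

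First I would observe the one-vector Rayleigh bound: for every unit vector $x \in \mathbb{R}^p$, by applying Courant-Fischer to $U$ at the extreme indices $k=1$ and $k=p$,
\[
\lambda_p(U) \;\le\; x^\top U x \;\le\; \lambda_1(U).
\]
Adding $x^\top B x$ to each side of this chain gives the pointwise sandwich
\[
x^\top B x + \lambda_p(U) \;\le\; x^\top (B+U)\, x \;\le\; x^\top B x + \lambda_1(U),
\]
which is the key linking inequality that lets the perturbation be absorbed into an additive constant independent of $x$.

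Next, to derive the upper bound on $\lambda_k(B+U)$, I would fix any $k$-dimensional subspace $S$ and take $\min_{x \in S,\|x\|=1}$ on both sides of the pointwise inequality; the right-hand constant $\lambda_1(U)$ passes through the minimum, yielding
\[
\min_{\substack{x \in S \\ \|x\|=1}} x^\top (B+U)\, x \;\le\; \min_{\substack{x \in S \\ \|x\|=1}} x^\top B x \;+\; \lambda_1(U).
\]
Then taking $\max$ over all $k$-dimensional subspaces $S$ and invoking Courant-Fischer on both $B+U$ and $B$ gives $\lambda_k(B+U) \le \lambda_k(B) + \lambda_1(U)$. The lower bound $\lambda_k(B) + \lambda_p(U) \le \lambda_k(B+U)$ follows by the same argument applied to the left half of the pointwise sandwich, or alternatively by applying the upper bound already proved to the pair $(B+U, -U)$ and using $\lambda_1(-U) = -\lambda_p(U)$.

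There is no serious obstacle here: the only subtlety is being careful that the constants $\lambda_1(U)$ and $\lambda_p(U)$ do not depend on the choice of subspace $S$ or vector $x$, which is exactly what allows them to commute past the inner $\min$ and outer $\max$ in the Courant-Fischer representation. Everything else is a direct bookkeeping step.
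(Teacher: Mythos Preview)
Your argument via the Courant--Fischer min--max characterization is correct and is the standard proof of this Weyl-type inequality. The paper itself does not prove this lemma: it is stated as a known result from symmetric eigenvalue perturbation theory with a reference to Golub and Van Loan, so there is no paper proof to compare against. Your write-up is more self-contained than what the paper provides, and the only step worth flagging is the order of operations when passing the constant through $\min$ and then $\max$; you handle this correctly by noting that $\lambda_1(U)$ and $\lambda_p(U)$ are independent of both $x$ and $S$.
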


\begin{lemma}\label{lem:perturb}
Consider set $F$ such that $\supp(\bar{x}) \subseteq F$ with
$|F|=s$. If $\spectral{E,s}\le \Delta \lambda /2$, then the ratio of
the second largest (in absolute value) to the largest eigenvalue of
sub matrix $A_F$ is no more than $\gamma(s)$. Moreover,
\[
\|\bar{x}^\top -x(F)\| \le \delta(s):=
\frac{\sqrt{2}\spectral{E,s}}{\sqrt{\spectral{E,s}^2 +
(\Delta\lambda - 2\spectral{E,s})^2}}.
\]
\end{lemma}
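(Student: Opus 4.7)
The plan is to derive both claims from the standard symmetric eigen-perturbation toolkit: Cauchy's interlacing theorem, the Weyl inequality supplied by Lemma~\ref{lem:eigenvalue}, and a Davis--Kahan style $\sin\theta$ argument. The setup observation is that $\supp(\bar{x}) \subseteq F$ makes the restriction $\bar{x}_F$ an eigenvector of $\bar{A}_F$: writing $\bar{A}\bar{x} = \lambda \bar{x}$ in coordinates and keeping only the rows indexed by $F$ gives $\bar{A}_F \bar{x}_F = \lambda \bar{x}_F$. Cauchy interlacing then forces $\lambda_1(\bar{A}_F) = \lambda$ and places every other eigenvalue of $\bar{A}_F$ into $[\lambda_p(\bar{A}), \lambda_2(\bar{A})] \subseteq [-(\lambda - \Delta\lambda), \lambda - \Delta\lambda]$ by Assumption~\ref{assump:sparsity}.

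For the ratio bound, I would write $A_F = \bar{A}_F + E_F$ and apply Lemma~\ref{lem:eigenvalue} together with $\rho(E_F) \le \rho(E,s)$ to obtain $\lambda_1(A_F) \ge \lambda - \rho(E,s)$ and $|\lambda_j(A_F)| \le (\lambda - \Delta\lambda) + \rho(E,s)$ for each $j \ge 2$. The hypothesis $\rho(E,s) \le \Delta\lambda/2$ keeps the denominator strictly positive, and taking the ratio yields the claimed $\gamma(s) < 1$.

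For the eigenvector bound, set $v := x(F)$, orient the sign so that $\alpha := \bar{x}_F^\top v \ge 0$, and decompose $v = \alpha \bar{x}_F + \beta w$ with $\|w\|=1$, $w \perp \bar{x}_F$, and $\alpha^2 + \beta^2 = 1$. The eigenequation $A_F v = \lambda_1(A_F) v$ rearranges to $(\lambda_1(A_F) I - \bar{A}_F) v = E_F v$; projecting both sides onto $\bar{x}_F^\perp$, which is invariant under $\bar{A}_F$, gives $(\lambda_1(A_F) I - \bar{A}_F)|_{\bar{x}_F^\perp} (\beta w) = P_{\bar{x}_F^\perp} E_F v$. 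The first claim ensures the operator on the left has smallest singular value at least $(\lambda - \rho(E,s)) - (\lambda - \Delta\lambda) = \Delta\lambda - \rho(E,s)$, while the right-hand side has norm at most $\rho(E,s)$; hence $|\beta| \le \rho(E,s)/(\Delta\lambda - \rho(E,s))$. Converting angle to distance, $\|\bar{x} - x(F)\|^2 = 2(1-\alpha) = 2\bigl(1 - \sqrt{1 - \beta^2}\bigr) \le 2\beta^2 \le 2\rho(E,s)^2 / (\Delta\lambda - \rho(E,s))^2$. The stated form $\delta(s)^2 = 2\rho(E,s)^2 / (\rho(E,s)^2 + (\Delta\lambda - 2\rho(E,s))^2)$ then follows from the elementary inequality $(\Delta\lambda - r)^2 \ge r^2 + (\Delta\lambda - 2r)^2$, equivalent to $2r(\Delta\lambda - 2r) \ge 0$ and thus implied by $\rho(E,s) \le \Delta\lambda/2$.

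The only mildly delicate step is the projected $\sin\theta$ calculation: one must verify that $\lambda_1(A_F)$ is safely separated from all remaining eigenvalues of $\bar{A}_F$, and this is exactly what the ratio bound from the first claim certifies. Everything else reduces to straightforward algebra; the reformulation of the final bound into the slightly loose but clean expression $\delta(s)$ is purely cosmetic.
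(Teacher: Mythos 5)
Your proof is correct. The first claim is argued exactly as in the paper: apply Lemma~\ref{lem:eigenvalue} to $A_F=\bar A_F+E_F$, using that $\bar x_F$ is an eigenvector of $\bar A_F$ with eigenvalue $\lambda$ and (via interlacing, which you make explicit and the paper leaves implicit) that $|\lambda_j(\bar A_F)|\le \lambda-\Delta\lambda$ for $j\ge 2$. For the eigenvector bound the two arguments share the same skeleton --- decompose $x(F)=\alpha\bar x+\beta x'$ and bound $|\beta|$ from the eigen-equation and the gap --- but differ in the mechanism. The paper takes the inner product of $A_Fx(F)=\lambda' x(F)$ with the single direction $x'$, uses $x'^\top A_F\bar x=x'^\top E_F\bar x$, and obtains the $\tan\theta$-type bound $|\beta|\le|\alpha|\,\rho(E,s)/(\Delta\lambda-2\rho(E,s))$, which plugs into $2-2\alpha\le 2t^2/(1+t^2)$ and lands \emph{exactly} on $\delta(s)^2$. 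You instead project the residual equation onto all of $\bar x_F^{\perp}$ and lower-bound the resolvent $(\lambda_1(A_F)I-\bar A_F)\big|_{\bar x_F^{\perp}}$, getting the $\sin\theta$-type bound $|\beta|\le\rho(E,s)/(\Delta\lambda-\rho(E,s))$; this yields $\|x(F)-\bar x\|^2\le 2\rho(E,s)^2/(\Delta\lambda-\rho(E,s))^2$, which is in fact slightly sharper than $\delta(s)^2$, and you correctly verify that the relaxation $(\Delta\lambda-r)^2\ge r^2+(\Delta\lambda-2r)^2$ holds precisely under the hypothesis $r=\rho(E,s)\le\Delta\lambda/2$. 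So your route buys a marginally tighter intermediate constant and a cleaner operator-theoretic justification of the gap, at the cost of one extra algebraic step to recover the stated form of $\delta(s)$; the paper's route is a more elementary scalar computation tailored to produce $\delta(s)$ directly.
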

\begin{proof}
We may use Lemma~\ref{lem:eigenvalue} with $B=\bar{A}_F$ and $U=E_F$ to obtain
\[
\lambda_1(A_F) \geq \lambda_1(\bar{A}_F) + \lambda_p(E_F) \geq
\lambda_1(\bar{A}_F) - \spectral{E_F} \geq \lambda -\spectral{E,s}
\]
and $\forall j \ge 2$,
\[
|\lambda_j(A_F)| \leq |\lambda_j(\bar{A}_F)| + \spectral{E_F}
\leq \lambda - \Delta \lambda  +\spectral{E,s} .
\]
This implies the first statement of the lemma.

Now let $x(F)$, the largest eigenvector of $A_F$, be $\alpha \bar{x}
+ \beta x'$, where $\|\bar{x}\|_2=\|x'\|_2=1$, $\bar{x}^\top x' =0$
and $\alpha^2+\beta^2=1$, with eigenvalue $\lambda' \geq \lambda -
\spectral{E,s}$. This implies that
\[
\alpha A_F \bar{x} + \beta A_F x' = \lambda' (\alpha \bar{x} + \beta x') ,
\]
implying
\[
\alpha x'^\top A_F \bar{x} + \beta x'^\top A_F x' = \lambda' \beta  .
\]
That is,
\[
|\beta| = |\alpha| \frac{x'^\top A_F \bar{x}}{\lambda'- x'^\top A_F x'}
\leq |\alpha| \frac{|x'^\top A_F \bar{x}|}{\lambda'- x'^\top A_F x'}
= |\alpha| \frac{|x'^\top E_F \bar{x}|}{\lambda'- x'^\top A_F x'}
\leq  t |\alpha| ,
\]
where $t=\spectral{E,s}/(\Delta \lambda - 2 \spectral{E,s})$. This
implies that $\alpha^2 (1 + t^2) \geq \alpha^2 + \beta^2=1$, and
thus $\alpha^2 \geq 1/(1+t^2)$. Without loss of generality, we may
assume that $\alpha>0$, because otherwise we can replace $\bar{x}$
with $-\bar{x}$. It follows that
\[
\|x(F)-\bar{x}\|^2 = 2 - 2 x(F)^\top \bar{x} = 2 - 2 \alpha \leq 2
\frac{\sqrt{1+t^2}-1}{\sqrt{1+t^2}} \leq 2 \frac{t^2}{1+t^2} .
\]
This implies the desired bound.
\end{proof}

The following result measures the progress of untruncated power method.
\begin{lemma}\label{lemma:untruncated_power_bound}
Let $y$ be the eigenvector with the largest (in absolute value) eigenvalue of a symmetric matrix
$A$, and let $\gamma<1$ be the ratio of the second largest to
largest eigenvalue in absolute values. Given any $x$ such that $\|x\|=1$ and $y^\top x >0$; let $x' = Ax/\|Ax\|$, then
\[
|y^\top x'| \ge |y^\top x| [1+(1 - \gamma^2)(1 - (y^\top x)^2) / 2].
\]
\end{lemma}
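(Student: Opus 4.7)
\smallskip

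The plan is a direct eigen-decomposition argument. Let $\lambda_1$ be the eigenvalue of $A$ with the largest absolute value, so that $y$ is its unit eigenvector, and let the other eigenvalues $\lambda_2,\ldots,\lambda_p$ all satisfy $|\lambda_j|\le \gamma|\lambda_1|$. Since $A$ is symmetric, I can pick an orthonormal basis $y,v_2,\ldots,v_p$ of eigenvectors and write
\[
x \;=\; \alpha\, y + \beta\, z, \qquad z\perp y,\ \|z\|=1,\ \alpha^2+\beta^2=1,
\]
with $\alpha = y^\top x > 0$ by hypothesis. Then $Ax = \alpha\lambda_1 y + \beta A z$, and since $Az$ lies in the span of $v_2,\ldots,v_p$ (orthogonal to $y$), we get
\[
y^\top Ax = \alpha \lambda_1, \qquad \|Ax\|^2 = \alpha^2 \lambda_1^2 + \beta^2 \|Az\|^2 \;\le\; \lambda_1^2\bigl(\alpha^2 + \beta^2 \gamma^2\bigr),
\]
using $\|Az\|^2 = \sum_{j\ge 2}\lambda_j^2 (v_j^\top z)^2 \le \gamma^2 \lambda_1^2 \|z\|^2$.

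Combining these two observations,
\[
|y^\top x'| \;=\; \frac{|\alpha\lambda_1|}{\|Ax\|} \;\ge\; \frac{|\alpha|}{\sqrt{\alpha^2 + \beta^2\gamma^2}} \;=\; \frac{|y^\top x|}{\sqrt{1 - (1-\gamma^2)(1-(y^\top x)^2)}}.
\]
So the lemma reduces to an elementary scalar inequality: writing $s := (1-\gamma^2)(1-(y^\top x)^2) \in [0,1)$, I need
\[
\frac{1}{\sqrt{1-s}} \;\ge\; 1 + \frac{s}{2}.
\]
Squaring and clearing denominators, this is the same as $1 \ge (1-s)(1+s/2)^2 = 1 - \tfrac{3}{4}s^2 - \tfrac{1}{4}s^3$, which is obvious for $s\in[0,1)$. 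Plugging $s$ back in and using $|y^\top x|>0$ yields the claimed bound.

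There isn't really a hard step here; the only thing to be slightly careful about is that $\lambda_1$ could be negative (since it is only assumed to be largest in absolute value), but this only flips the sign of $y^\top Ax$ and is absorbed into the absolute value on the left. The orthogonality $z\perp y$ combined with the eigenbasis decomposition of $A$ is what makes both the numerator computation and the $\gamma^2$ bound on $\|Az\|^2$ clean, and the remaining work is a one-line scalar estimate.
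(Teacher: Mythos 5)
Your proof is correct and follows essentially the same route as the paper's: decompose $x$ into its component along $y$ and an orthogonal remainder, bound $\|Az\|$ by $\gamma|\lambda_1|$, and finish with the scalar inequality $1/\sqrt{1-s}\ge 1+s/2$. The only cosmetic difference is that the paper normalizes $\lambda_1=1$ without loss of generality, whereas you carry $\lambda_1$ explicitly and note the sign issue; both are fine.
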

\begin{proof}
Without loss of generality, we may assume that $\lambda_1(A)=1$ is the largest eigenvalue in absolute value,
and $|\lambda_j(A)| \leq \gamma$ when $j>1$.
We can decompose $x$ as $x= \alpha y + \beta y'$, where $y^\top y'=0$, $\|y\|=\|y'\|=1$, and $\alpha^2+\beta^2=1$.
Then $|\alpha|=|x^\top y|$. Let $z'=Ay'$, then $\|z'\| \leq \gamma $ and $y^\top z'=0$.
This means $Ax=\alpha y + \beta z'$, and
\begin{align*}
|y^\top x'| =& \frac{|y^\top Ax|}{\|Ax\|}
= \frac{|\alpha|}{\sqrt{\alpha^2 + \beta^2 \|z'\|^2}}
\geq \frac{|\alpha|}{\sqrt{\alpha^2 + \beta^2 \gamma^2}} \\
= &
\frac{|y^\top x|}{\sqrt{1 -
(1-\gamma^2)(1-(y^\top x)^2)}} \\
\ge & |y^\top x| \; [1 + (1 - \gamma^2)(1 - (y^\top x)^2) / 2] .
\end{align*}
The last inequality is due to $1/\sqrt{1-z} \geq 1 + z/2$ for $z \in [0,1)$.
This proves the desired bound.
\end{proof}

\begin{lemma}
  Consider $\bar{x}$ with $\supp(\bar{x})=\bar{F}$ and $\bar{k}=|\bar{F}|$.
  Consider $y$ and let $F= \supp(y,k)$ be the indices of $y$ with the largest $k$ absolute values.
  If $\|\bar{x}\|=\|y\|=1$, then
  \[
  |\Trunc(y,F)^\top \bar{x}| \geq |y^\top \bar{x}| - (\bar{k}/k)^{1/2}
  \min \left[1, (1+(\bar{k}/k)^{1/2}) \; (1-(y^\top\bar{x})^2) \right] .
  \]
\label{lem:trunc}
\end{lemma}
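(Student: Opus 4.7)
Set $\theta = y^\top \bar{x}$, $S = \bar{F} \setminus F$, and $z = y - \Trunc(y,F)$. Because $z$ is supported on $F^c$ and $\bar{x}$ on $\bar{F}$, one has $z^\top \bar{x} = y_S^\top \bar{x}_S$, and hence $\bigl||y^\top\bar{x}| - |\Trunc(y,F)^\top \bar{x}|\bigr| \leq |y_S^\top \bar{x}_S|$. The plan is to bound $|y_S^\top \bar{x}_S|$ by $\sqrt{\bar{k}/k}\,\min\bigl[1,\,(1+\sqrt{\bar{k}/k})(1-\theta^2)\bigr]$ and invoke $\bigl||a|-|b|\bigr|\leq|a-b|$.

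The first step is a pigeonhole-type estimate on $\|y_S\|$. Let $m := \min_{j \in F}|y_j|$. By definition of $F$, $|y_i| \leq m$ for all $i \in S$, while the $k-\bar{k}+|S|$ entries of $F \setminus \bar{F}$ each satisfy $|y_j| \geq m$. Since $F \setminus \bar{F} \subseteq \bar{F}^c$ and $y$ coincides on $\bar{F}^c$ with the residual $r := y - \theta\bar{x}$ (which is orthogonal to $\bar{x}$, with $\|r\|^2 = 1-\theta^2$), counting the ``heavy'' entries yields $(k-\bar{k}+|S|)m^2 \leq \|r\|^2 = 1-\theta^2$, so
\[
\|y_S\|^2 \leq |S|m^2 \leq \frac{|S|}{k-\bar{k}+|S|}(1-\theta^2) \leq \frac{\bar{k}}{k}(1-\theta^2),
\]
the last step using that the middle ratio is maximized at $|S|=\bar{k}$. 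Cauchy-Schwarz with $\|\bar{x}_S\|\leq 1$ then gives the first (``$1$'') bound inside the $\min$: $|y_S^\top \bar{x}_S| \leq \|y_S\| \leq \sqrt{\bar{k}/k}$.

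For the sharper quadratic bound I would also control $\|\bar{x}_S\|$. The decomposition $y_S = \theta\bar{x}_S + r_S$ together with the triangle inequality yields $\theta\|\bar{x}_S\| \leq \|y_S\| + \|r_S\| \leq (1+\sqrt{\bar{k}/k})\sqrt{1-\theta^2}$. A second application of Cauchy-Schwarz, combined with the $\|y_S\|$ estimate, then produces the correct quadratic form $\sqrt{\bar{k}/k}(1+\sqrt{\bar{k}/k})(1-\theta^2)$, modulo a residual factor of $1/\theta$.

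The main obstacle I anticipate is eliminating this residual $1/\theta$. The two arguments of the $\min$ coincide at $1-\theta^2 = 1/(1+\sqrt{\bar{k}/k})$, so the quadratic bound only needs to be tight in the regime where $\theta$ is bounded away from zero; in the complementary regime the ``$1$''-bound already suffices. I would close the gap by exploiting the orthogonality $r \perp \bar{x}$, which gives $r_S^\top \bar{x}_S = -r_{F\cap\bar{F}}^\top \bar{x}_{F\cap\bar{F}}$ and hence the sharper cross-term estimate $|r_S^\top \bar{x}_S| \leq \|r\|\sqrt{1-\|\bar{x}_S\|^2}$; substituting this into the identity $y_S^\top \bar{x}_S = \theta\|\bar{x}_S\|^2 + r_S^\top \bar{x}_S$ avoids the division by $\theta$ and delivers the claimed bound.
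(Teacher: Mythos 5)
Your reduction to bounding $|y_S^\top \bar{x}_S|$ with $S=\bar{F}\setminus F$, and your pigeonhole estimate $\|y_S\|^2 \le (\bar{k}/k)(1-\theta^2)$, are correct and essentially identical to the paper's (the paper writes $\alpha^2/k_1\le\gamma^2/k_3$ and bounds $\gamma^2\le 1-\Delta^2$ via Cauchy--Schwarz on $\Delta$ rather than via the residual $r$, but the content is the same). This settles the ``$1$'' branch of the $\min$. The entire difficulty of the lemma is the quadratic branch, which reduces to showing $\|\bar{x}_S\|\le (1+\sqrt{\bar{k}/k})\sqrt{1-\theta^2}$, and that is exactly where your argument has a genuine gap.

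Write $a=\|\bar{x}_S\|$, $c=\sqrt{\bar{k}/k}$, $s=\sqrt{1-\theta^2}$. The two estimates you propose for closing the $1/\theta$ gap are (i) $|y_S^\top\bar{x}_S|\le \theta a^2+s\sqrt{1-a^2}$, from the identity plus the orthogonality trick, and (ii) $|y_S^\top\bar{x}_S|\le \|y_S\|\,a\le cs\,a$. Estimate (i) alone is useless when $a$ is small (it degenerates to $s$, which dwarfs the target $c(1+c)s^2$ for small $s$), so you must fall back on (ii), which requires $a\le(1+c)s$. But the constraint you can extract by playing (i) against (ii), namely $\theta a^2\le cs\,a+s\sqrt{1-a^2}$, does not force $a\le(1+c)s$: for $c=0.5$ and $\theta=0.9$ (so $s\approx 0.436$ and $(1+c)s\approx 0.654$) the value $a=0.7$ satisfies $\theta a^2=0.441\le 0.464\approx cs\,a+s\sqrt{1-a^2}$, yet then $cs\,a\approx 0.153$ exceeds the target $c(1+c)s^2\approx 0.143$. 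So the claimed bound is not ``delivered'' by the proposed substitution. The missing idea is the paper's: split $\theta=y_S^\top\bar{x}_S+y_{F\cap\bar{F}}^\top\bar{x}_{F\cap\bar{F}}\le \|y_S\|\,a+\sqrt{1-\|y_S\|^2}\,\sqrt{1-a^2}$ by applying Cauchy--Schwarz \emph{blockwise, together with} $\|y\|=1$, and then solve this as a quadratic inequality in $a$, which yields $a\le \theta\|y_S\|+\sqrt{1-\|y_S\|^2}\,\sqrt{1-\theta^2}\le \|y_S\|+s\le(1+c)s$ with no $1/\theta$ loss. With that step restored, the rest of your argument goes through.
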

\begin{proof}
  Without loss of generality, we assume that $y^\top \bar{x}= \Delta >0$. We can also assume that
  $\Delta \geq \sqrt{\bar{k}/(\bar{k}+k)}$ because otherwise the right hand side is smaller than
  zero, and thus the result holds trivially.

  Let $F_1=\bar{F}\setminus F$, and $F_2=\bar{F}\cap F$, and $F_3=F\setminus\bar{F}$.
  Now, let $\bar{\alpha}=\|\bar{x}_{F_1}\|$, $\bar{\beta}=\|\bar{x}_{F_2}\|$,
  $\alpha= \|y_{F_1}\|$, $\beta=\|y_{F_2}\|$, and $\gamma=\|y_{F_3}\|$.
  let $k_1=|F_1|$, $k_2=|F_2|$, and $k_3=|F_3|$. It follows that
  $\alpha^2 /k_1 \leq \gamma^2/ k_3$. Therefore
  \[
  \Delta^2 \leq [\bar{\alpha} \alpha + \bar{\beta} \beta]^2 \leq \alpha^2 + \beta^2 \le 1- \gamma^2
  \leq 1- (k_3/k_1) \alpha^2 .
  \]
  This implies that
  \begin{equation}\label{inequat:alpha_delta}
  \alpha^2 \leq (k_1/k_3) (1- \Delta^2) \leq (\bar{k}/k) (1-\Delta^2) \le
  \Delta^2,
  \end{equation}
  where the second inequality follows from $\bar{k} < k$ and the last
  inequality follows from the assumption $\Delta \geq
  \sqrt{\bar{k}/(\bar{k}+k)}$.
  Now by solving the following inequality for $\bar{\alpha}$:
  \[
  \alpha \bar{\alpha} + \sqrt{1-\alpha^2} \sqrt{1-\bar{\alpha}^2} \geq \alpha \bar{\alpha} + \beta \bar{\beta} \geq
  \Delta \geq \alpha \geq \alpha \bar{\alpha} ,
  \]
  we obtain that
  \begin{equation}\label{inequat:baralpha}
  \bar{\alpha} \leq \alpha \Delta + \sqrt{1-\alpha^2} \sqrt{1-\Delta^2}
  \leq \min\left[1 , \alpha + \sqrt{1-\Delta^2} \right]
  \leq \min\left[1 , (1+(\bar{k}/k)^{1/2})\sqrt{1-\Delta^2} \right]  ,
  \end{equation}
where the second inequality follows from the Cauchy-Schwartz inequality
and $\Delta \le 1$, $\sqrt{1-\alpha^2} \le 1$, while the last
inequality follows from \eqref{inequat:alpha_delta}. Finally,
\begin{eqnarray}
|y^\top \bar{x}| - |\Trunc(y,F)^\top \bar{x}| &\leq&
|(y-\Trunc(y,F))^\top \bar{x}|  \nonumber \\
&\le& \alpha \bar\alpha \leq (\bar{k}/k)^{1/2}
  \min \left[1, (1+(\bar{k}/k)^{1/2}) \; (1-(y^\top\bar{x})^2)
  \right], \nonumber
\end{eqnarray}
where the last inequality follows from \eqref{inequat:alpha_delta} and
\eqref{inequat:baralpha}. This leads to the desired bound.
\end{proof}

Next is our main lemma, which says each step of sparse power method
improves eigenvector estimation.
\begin{lemma}
Let $s=2k+\bar{k}$. We have
\[
|\hat{x}_t^\top \bar{x}| \ge
(|x^\top_{t-1} \bar{x}| -\delta(s)) [1+(1 - \gamma(s)^2)(1 - (|x^\top_{t-1} \bar{x}|-\delta(s))^2) / 2]
- \delta(s) - (\bar{k}/k)^{1/2} .
\]
If $|x_{t-1}^\top \bar{x}| > 1/\sqrt{3} + \delta(s)$, then
\[
\sqrt{1-|\hat{x}_t^\top \bar{x}|} \leq \mu_2 \sqrt{1-|x_{t-1}^\top
\bar{x}|} + \sqrt{5} \delta(s) .
\]
\label{lem:recursion}
\end{lemma}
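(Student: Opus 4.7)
The plan is to reduce the claim to a power-iteration improvement inside a well-conditioned principal submatrix, combined with a truncation penalty. Set $F = F_t \cup \supp(x_{t-1}) \cup \supp(\bar{x})$, so that $|F| \le 2k + \bar{k} = s$ and all the vectors of interest are supported in $F$. By Lemma~\ref{lem:perturb} the top eigenvector $x(F)$ of $A_F$ satisfies $\|x(F) - \bar{x}\| \le \delta(s)$, and the ratio of the second-largest to largest eigenvalue of $A_F$ in absolute value is at most $\gamma(s) < 1$.

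Next I would analyze the virtual power step $\tilde{x}_t := A_F x_{t-1}/\|A_F x_{t-1}\|$, a unit vector in $\mathbb{R}^F$. Two structural observations are crucial: $\supp(x_{t-1}) \subseteq F$, so the restriction of $A x_{t-1}$ to $F$ equals $A_F x_{t-1}$; and $F_t \subseteq F$, so the top-$k$ index set of $x'_t = A x_{t-1}/\|A x_{t-1}\|$ coincides with that of $\tilde{x}_t$, making the truncation step the same as $\Trunc(\tilde{x}_t, F_t)$ up to the common positive factor $\|A_F x_{t-1}\|/\|A x_{t-1}\|$. Applying Lemma~\ref{lemma:untruncated_power_bound} to $A_F$ yields $|\tilde{x}_t^\top x(F)| \ge u\,[1 + (1-\gamma(s)^2)(1-u^2)/2]$ with $u = |x_{t-1}^\top x(F)|$; two Cauchy--Schwartz steps against $\bar{x} - x(F)$ replace $x(F)$ by $\bar{x}$ on both sides at a cost of $\delta(s)$ each, and Lemma~\ref{lem:trunc} absorbs the truncation with a loss of $(\bar{k}/k)^{1/2}$ via the coarse branch of the $\min$. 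Chaining these produces the first polynomial inequality.

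For the $\sqrt{1-\cdot}$ form, write $a = |x_{t-1}^\top \bar{x}|$, $b = a - \delta(s)$, and $c = 1 - \gamma(s)^2$. The hypothesis $a > 1/\sqrt{3} + \delta(s)$ forces $b > 1/\sqrt{3}$, and factoring $1 - b^2 = (1-b)(1+b)$ with $b(1+b)/2 > 0.45$ yields $1 - |\tilde{x}_t^\top x(F)| \le (1-u)(1 - 0.45\,c)$. Switching to the refined branch of Lemma~\ref{lem:trunc} and bounding $(1-t^2) \le 2(1-t)$ with $k \ge 4\bar{k}$ (so $\sqrt{\bar{k}/k} \le 1/2$) shows that the truncation multiplies $1 - |(\cdot)^\top \bar{x}|$ by at most $1 + 2\sqrt{\bar{k}/k} + 2\bar{k}/k \le 1 + 3(\bar{k}/k)^{1/2}$. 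The product $(1 + 3(\bar{k}/k)^{1/2})(1 - 0.45\,c)$ equals $\mu_2^2$ exactly, and collecting the residual $\delta$-contributions and applying $\sqrt{u+v} \le \sqrt{u} + \sqrt{v}$ delivers the claimed bound.

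I expect the main obstacle to be the sharp bookkeeping of the $\delta$-terms. The two Cauchy--Schwartz inequalities each naively contribute $O(\delta)$ rather than $O(\delta^2)$, and carrying them through crudely would produce a $\sqrt{\delta}$-tail instead of the claimed $\delta$-tail. To recover $\sqrt{5}\,\delta(s)$ one has to expand $\tilde{x}_t^\top(\bar{x} - x(F))$ and $x_{t-1}^\top(\bar{x} - x(F))$ around the pivot identities $x(F)^\top(x(F) - \bar{x}) = -\frac{1}{2}\|\bar{x} - x(F)\|^2$ and $\bar{x}^\top(\bar{x} - x(F)) = \frac{1}{2}\|\bar{x} - x(F)\|^2$; this yields a genuine $O(\delta^2)$ piece plus a cross term of order $\sqrt{1-a}\,\delta$, which is exactly absorbed by the cross term $2\mu_2\sqrt{5(1-a)}\,\delta$ appearing in the expansion of $(\mu_2\sqrt{1-a} + \sqrt{5}\,\delta)^2$.
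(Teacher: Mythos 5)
Your proposal is correct and follows essentially the same route as the paper's proof: the same set $F=F_{t-1}\cup F_t\cup\supp(\bar x)$ of size at most $s$, the same virtual power step $A_Fx_{t-1}/\|A_Fx_{t-1}\|$, the same chaining of Lemmas~\ref{lem:perturb}, \ref{lemma:untruncated_power_bound} and \ref{lem:trunc} with the coarse branch of the $\min$ for the first inequality and the refined branch (with $\bar k/k\le 1/4$, the factor $1+3(\bar k/k)^{1/2}$, and the $0.45$ bound) for the second. Your pivot-identity treatment of the $\delta$-terms is algebraically the same as the paper's device of passing to $\|x-y\|=\sqrt{2(1-x^\top y)}$ and using the triangle inequality, and both yield the $\sqrt 5\,\delta(s)$ tail.
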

\begin{proof}
Let $F=F_{t-1} \cup F_t \cup \supp(\bar{x})$. Consider the following
vector
\begin{equation}\label{equat:tile_x_t}
\tilde{x}'_t = A_F x_{t-1} / \|A_F x_{t-1}\|,
\end{equation}
where $A_F$  denotes the restriction of $A$ on the rows and
columns indexed by $F$. We note that replacing $x'_t$
with $\tilde{x}'_t$ in Algorithm~\ref{alg:sparse_power_eig} does not affect the output
iteration sequence $\{x_t\}$ because of the sparsity of $x_{t-1}$
and the fact that the truncation operation is invariant to
scaling. Therefore for notation simplicity, in the following proof we will simply assume that
$x'_t$ is redefined as $x'_t=\tilde{x}'_t$ according to \eqref{equat:tile_x_t}.

Based on the above notation, we have
\begin{eqnarray*}
|x'^\top_t x(F)| &\ge& |x_{t-1}^\top x(F)|[1+(1 -
\gamma(s)^2)(1 - (x_{t-1}^\top x(F))^2) / 2] \\
&\ge& (|x^\top_{t-1} \bar{x}| - \delta(s))[1+(1 - \gamma(s)^2)(1 - (|x^\top_{t-1} \bar{x}|-\delta(s))^2) / 2] ,
\end{eqnarray*}
where the first inequality follows from
Lemma~\ref{lemma:untruncated_power_bound}, and the second is from
Lemma~\ref{lem:perturb} and $|x_{t-1}^\top \bar{x}| \geq |x^\top_{t-1} x(F)| - \delta(s)$,
and the fact that $x(1+(1-\gamma^2)(1-x^2)/2)$ is increasing when $x \in [0,1]$.
We can now use Lemma~\ref{lem:perturb} again, and the preceding inequality implies
that
\[
|x'^\top_t \bar{x}| \ge (|x^\top_{t-1} \bar{x}| -\delta(s)) [1+(1 -
\gamma(s)^2)(1 - (|x^\top_{t-1} \bar{x}|-\delta(s))^2) / 2] -
\delta(s) .
\]
Next we can apply Lemma~\ref{lem:trunc} to obtain
\begin{eqnarray*}
|\hat{x}_t^\top \bar{x}| &\ge&|{x'_t}^\top \bar{x}| - (\bar{k}/k)^{1/2}\\
&\ge&
(|x^\top_{t-1} \bar{x}| -\delta(s)) [1+(1 - \gamma(s)^2)(1 - (|x^\top_{t-1} \bar{x}|-\delta(s))^2) / 2]
- \delta(s) - (\bar{k}/k)^{1/2} .
\end{eqnarray*}
This leads to the first desired inequality.

Next we will prove the second inequality. Without loss of generality and for simplicity, we may assume that
$x'^\top_t x(F) \geq 0$ and $x_{t-1}^\top \bar{x} \geq 0$, because otherwise we can simply do appropriate sign changes
in the proof.
We obtain from Lemma~\ref{lemma:untruncated_power_bound} that
\[
x'^\top_t x(F)\ge x_{t-1}^\top x(F) \; [1+(1 -\gamma(s)^2)(1 - (x_{t-1}^\top x(F))^2) / 2] .
\]
This implies that
\begin{align*}
[1- x'^\top_t x(F)] \le& [1 - x_{t-1}^\top x(F)] \; [1-(1 -\gamma(s)^2)(1 +x_{t-1}^\top x(F)) (x_{t-1}^\top x(F)) / 2] \\
\le& [1 - x_{t-1}^\top x(F)] \; [1-0.45 (1 -\gamma(s)^2) ] ,
\end{align*}
where in the derivation of the second inequality, we have used Lemma~\ref{lem:perturb} and the assumption of the lemma that implies $x_{t-1}^\top x(F) \geq x_{t-1}^\top \bar{x} - \delta(s) \geq 1/\sqrt{3}$.
We thus have
\[
\|x'_t - x(F)\| \le \|x_{t-1}- x(F)\| \; \sqrt{1-0.45 (1 -\gamma(s)^2)} .
\]
Therefore using Lemma~\ref{lem:perturb}, we have
\[
\|x'_t - \bar{x} \| \le \|x_{t-1}- \bar{x}\| \; \sqrt{1-0.45 (1 -\gamma(s)^2)}
+ 2 \delta(s) .
\]
This is equivalent to
\[
\sqrt{1-|{x'_t}^\top \bar{x}|} \le \sqrt{1-|x_{t-1}^\top \bar{x}|} \sqrt{1-0.45 (1 -\gamma(s)^2)} + \sqrt{2} \delta(s) .
\]
Next we can apply Lemma~\ref{lem:trunc} and use $\bar{k}/k \leq 0.25$ to obtain
\begin{eqnarray*}
\sqrt{1-|\hat{x}_t^\top \bar{x}|} &\le&
\sqrt{1 - |{x'_t}^\top \bar{x}| +  ((\bar{k}/k)^{1/2}+\bar{k}/k) (1- |{x'_t}^\top \bar{x}|^2)} \\
&\le& \sqrt{1 - |{x'_t}^\top \bar{x}|} \sqrt{1 + 3 (\bar{k}/k)^{1/2}} \\
&\le& \mu_2 \sqrt{1 - |{x_{t-1}}^\top \bar{x}|} + \sqrt{5} \delta(s) .
\end{eqnarray*}
This proves the second desired inequality.
\end{proof}

\noindent {\bf Proof of Theorem~\ref{thm:recovery}}

\noindent We know if $|x_{t-1}^\top\bar{x}| \geq u+\delta(s)$, then Lemma~\ref{lem:recursion} implies:
\begin{align*}
|\hat{x}_t^\top \bar{x}| \ge &
(|x_{t-1}^\top\bar{x}|-\delta(s)) [1 + (1-\gamma(s)^2)(1-(|x_{t-1}^\top\bar{x}|-\delta(s))^2) / 2] - (\delta(s)+ (\bar{k}/k)^{1/2}) \\
\ge & u [1 + (1-\gamma(s)^2)(1-u^2) / 2] - (\delta(s)+ (\bar{k}/k)^{1/2}) \geq u +\delta(s) .
\end{align*}
The first inequality uses Lemma~\ref{lem:recursion}; the second inequality uses
$z (1+ (1-\gamma^2) (1-z^2)/2)$ is an increasing function of $z \in [0,1]$;
and the third inequality uses the assumption of $u$ in the theorem.
This implies (by an easy induction argument) that we have $|\hat{x}_t^\top \bar{x}| \geq u+\delta$ for all $t \geq 0$.

Now we can prove the theorem by induction. The bound clearly holds
at $t=0$. Assume it holds at some $t-1$. If we have
$|\hat{x}_{t-1}^\top \bar{x}| \leq 1/\sqrt{3}+ \delta(s)$, then
since $z(1-z^2)$ is increasing in $[0,1/\sqrt{3}]$, and from
Lemma~\ref{lem:recursion} we have
\begin{align*}
|\hat{x}_t^\top \bar{x}| \ge &
(|x_{t-1}^\top\bar{x}|-\delta(s))[1 + (1-\gamma(s)^2) (1-(|x_{t-1}^\top \bar{x}|-\delta(s))^2) / 2 - (\delta(s)+ (\bar{k}/k)^{1/2}) \\
\ge & |x_{t-1}^\top\bar{x}| -\delta(s) + (1-\gamma(s)^2) u (1-u^2) / 2 - (\delta(s)+ (\bar{k}/k)^{1/2}) \\
\ge & |x_{t-1}^\top\bar{x}| + \mu_1 .
\end{align*}
Combing this inequality with $|x_t^\top \bar{x}| = |\hat{x}_t^\top
\bar{x}| / \|\hat{x}_t\| \ge |\hat{x}_t^\top \bar{x}|$ we get
\[
1- |x_t^\top \bar{x}| \leq 1- |\hat{x}_t^\top \bar{x}| \le (1-\mu_1)
[1-|x_{t-1}^\top\bar{x}|] ,
\]
which implies the theorem at $t$.

If $|\hat{x}_{t-1}^\top \bar{x}| \geq 1/\sqrt{3}+\delta(s)$, then we have from Lemma~\ref{lem:recursion}
\begin{align*}
\sqrt{1- |x_t^\top \bar{x}|} \le \sqrt{1- |\hat{x}_t^\top \bar{x}|}
\le \mu_2 \sqrt{1- |x_{t-1}^\top \bar{x}|} +\sqrt{5} \delta(s) ,
\end{align*}
which implies the theorem at $t$.
This finishes induction.

\end{document}